\newcommand{\fracpartial}[2]{\frac{\partial #1}{\partial  #2}}
\def\x{\mathbf{x}}
\def\r{\mathbf{r}}
\def\bbeta{\mathbf{\beta}}
\def\bmu{\mathbf{\mu}}
\def\bsigma{\mathbf{\sigma}}
\newtheorem{theorem}{Theorem}[section]
\newtheorem{lemma}[theorem]{Lemma}
\newtheorem{proposition}[theorem]{Proposition}
\newenvironment{claim}[1][Claim]{\begin{trivlist}
\item[\hskip \labelsep {\bfseries #1}]}{\end{trivlist}}
\newenvironment{proof}[1][Proof]{\begin{trivlist}
\item[\hskip \labelsep {\bfseries #1}]}{\end{trivlist}}
\newenvironment{definition}[1][Definition]{\begin{trivlist}
\item[\hskip \labelsep {\bfseries #1}]}{\end{trivlist}}
\title{Inference with Discriminative Posterior}
\author{Jarkko Saloj\"arvi$^\dagger$ \and Kai Puolam\"aki$^\ddagger$ \and Eerika Savia$^\dagger$ \and Samuel Kaski$^\dagger$ \\
Helsinki Institute for Information Technology\\
Department of Information and Computer Science\\
Helsinki University of Technology\\
P.O. Box 5400, FI-02015 TKK, Finland\\
$^\dagger$ Author belongs to the Finnish Centre of Excellence in\\ Adaptive Informatics Research.\\
$^\ddagger$ Author belongs to the Finnish Centre of Excellence in\\ Algorithmic Data Analysis Research.
}
\date{}
\begin{document}
\maketitle

\begin{abstract}
  We study Bayesian discriminative inference given a model family
  $p(c,\x, \theta)$ that is assumed to contain all our prior
  information but still known to be incorrect. This falls in between
  ``standard'' Bayesian generative modeling and Bayesian regression,
  where the margin $p(\x,\theta)$ is known to be uninformative about
  $p(c|\x,\theta)$.  We give an axiomatic proof that
  \emph{discriminative posterior} is consistent for conditional
  inference; using the discriminative posterior is standard practice
  in classical Bayesian regression, but we show that it is theoretically
  justified for model families of joint densities as well. A practical
  benefit compared to Bayesian regression is that the standard methods
  of handling missing values in generative modeling can be extended
  into discriminative inference, which is useful if the amount of data
  is small. Compared to standard generative modeling, discriminative
  posterior results in better conditional inference if the model
  family is incorrect.  If the model family contains also the true
  model, the discriminative posterior gives the same result as
  standard Bayesian generative modeling. Practical computation is done
  with Markov chain Monte Carlo.
\end{abstract}

\section{Introduction}

Our aim is Bayesian discriminative inference in the case where the
model family $p(c,\x,\theta)$ is known to be incorrect. Here $\x$ is a
data vector and $c$ its class, and the $\theta$ are parameters of the
model family. By discriminative we mean predicting the conditional
distribution $p(c\mid \x)$.

\begin{sloppypar}
  The Bayesian approach of using the posterior of the generative model
  family $p(c,\x,\theta)$ has not been shown to be justified in this
  case, and it is known that it does not always generalize well to new
  data (in case of point estimates, see for example
  \cite{Efron75,Nadas88,Ng02}; in this paper we provide a toy example
  that illustrates the fact for posterior distributions). Therefore
  alternative approaches such as Bayesian regression are applied
  \cite{Gelman03}. It can be argued that the best solution is to
  improve the model family by incorporating more prior knowledge. This
  is not always possible or feasible, however, and simplified models
  are being generally used, often with good results. For example, it
  is often practical to use mixture models even if it is known a
  priori that the data cannot be faithfully described by them (see for
  example \cite{McLachlan00}). There are good reasons for still
  applying Bayesian-style techniques \cite{Hansen99} but the general
  problem of how to best do inference with incorrect model families is
  still open.
\end{sloppypar}

In practice, the usual method for discriminative tasks is Bayesian
regression. It disregards all assumptions about the distribution of
$\x$, and considers $\x$ only as covariates of the model for $c$.
Bayesian regression may give superior results in discriminative
inference, but the omission of a generative model for $\x$ (although
it may be readily available) makes it difficult to handle missing
values in the data. Numerous heuristic methods for imputing missing
values have been suggested, see for example \cite{Little02}, but no
theoretical arguments of their optimality have been presented.  Here
we assume that we are given a generative model family of the full data
($\x$, $c$), and therefore have a generative mechanism readily
available for imputing missing values.

From the generative modeling perspective, Bayesian regression ignores
any information about $c$ supplied by the marginal distribution of
$\x$. This is justified if (i) the covariates are explicitly chosen
when designing the experimental setting and hence are not noisy, or
(ii) there is a separate set of parameters for generating $\x$ on the
one hand and $c$ given $\x$ on the other, and the sets are assumed to
be independent in their prior distribution. In the latter case the
posterior factors out into two parts, and the parameters used for
generating $\x$ are neither needed nor useful in the regression
task. See for instance \cite{Gelman03,Sutton06} for more
details. However, there has been no theoretical justification for
Bayesian regression in the more general setting where the independence
does not hold.

For point estimates of generative models it is well known that
maximizing the joint likelihood and the conditional likelihood give in
general different results. Maximum conditional likelihood gives
asymptotically a better estimate of the conditional likelihood
\cite{Nadas88}, and it can be optimized with
expectation-maximization-type procedures
\cite{Salojarvi05icml,Woodland02}.  In this paper we extend that line
of work to show that the two different approaches, joint and
conditional modeling, result in different posterior distributions
which are asymptotically equal only if the true model is within the
model family. We give an axiomatic justification to the discriminative
posterior, and demonstrate empirically that it works as expected. If
there are no covariates, the discriminative posterior is the same as
the standard posterior of joint density
modeling, that is, ordinary Bayesian inference.\\

To our knowledge the extension from point estimates to a posterior
distribution is new. We are aware of only one suggestion, the
so-called supervised posterior \cite{Grunwald02}, which also has 
empirical support in the sense of maximum a posteriori estimates
\cite{Cerquides05}. The posterior has, however, only been justified
heuristically.  

For the purpose of regression, the discriminative posterior makes it
possible to use more general model structures than standard Bayesian
regression; in essence any generative model family can be used.  In
addition to giving a general justification to Bayesian regression-type
modeling, predictions given a generative model family $p(c,\x,\theta)$
should be better if the whole model is (at least approximately)
correct. The additional benefit is that the use of the full generative
model gives a principled way of handling missing values.  The gained
advantage, compared to using the standard non-discriminative
posterior, is that the predictions should be more accurate assuming
the model family is incorrect.

In this paper, we present the necessary background and definitions in
Section 2.  The discriminative posterior is derived briefly from a set
of five axioms in Section 3; the full proof is included as an
appendix. There is a close resemblance to Cox axioms, and standard
Bayesian inference can indeed be derived also from this set. However,
the new axioms allow also inference in the case where the model
manifold is known to be inadequate. In section 4 we show that
discriminative posterior can be extended in a standard manner
\cite{Little02} to handle data missing at random. In Section 5 we
present some experimental evidence that the discriminative posterior
behaves as expected.

\section{Aims and Definitions}

In this paper, we prove the following two claims; the claims follow
from Theorem~\ref{th:posterior}, discriminative posterior, which is the
main result of this paper.
\begin{claim}[Well-known]
  Given a discriminative model, a model $p(c\mid\x;\theta)$ for the
  conditional density, Bayesian regression results in consistent
  conditional inference.
\end{claim}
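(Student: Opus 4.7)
The plan is to derive Claim 1 as a direct specialization of Theorem~\ref{th:posterior}. In the purely discriminative setup there is no model for the marginal of $\x$: the model specifies only $p(c\mid\x;\theta)$ and a prior $p(\theta)$, and the $\x_i$ enter merely as covariates. I would show that, in this setting, the discriminative posterior collapses to the familiar Bayesian regression posterior
\[
p(\theta\mid \dataset)\;\propto\; p(\theta)\,\prod_{i=1}^{n} p(c_i\mid \x_i;\theta),
\]
since there is no joint factor $p(\x\mid\theta)$ to contribute and no free generative parameters to integrate out. Once this reduction is in place, consistency is a standard story and just needs to be quoted carefully.

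For the consistency half, I would proceed along classical lines. First, by the strong law of large numbers (applied under the true data-generating distribution $p^{*}(\x,c)$ from which the observations are assumed i.i.d.),
\[
\tfrac{1}{n}\sum_{i=1}^{n}\log p(c_i\mid \x_i;\theta)\;\longrightarrow\; E_{p^{*}(\x,c)}\!\left[\log p(c\mid \x;\theta)\right],
\]
and the right-hand side is maximized precisely at the $\theta$ that minimizes the conditional Kullback--Leibler divergence $E_{p^{*}(\x)}[D(p^{*}(c\mid\x)\,\|\,p(c\mid\x;\theta))]$. Under the claim's implicit assumption that the conditional model family is correctly specified, i.e. $p^{*}(c\mid\x) = p(c\mid\x;\theta^{*})$ for some unique $\theta^{*}$ in the support of the prior, the KL divergence vanishes only at $\theta^{*}$. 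Then standard Bayesian posterior-consistency results (Doob's theorem for identifiable models under a prior that charges every neighbourhood of $\theta^{*}$, or Schwartz's theorem with a suitable Kullback--Leibler support condition) yield weak concentration of $p(\theta\mid\dataset)$ on $\theta^{*}$ as $n\to\infty$, from which consistency of conditional inference follows by continuity of $\theta\mapsto p(c\mid\x;\theta)$.

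The main obstacle is not the asymptotics themselves (those are textbook) but rather being careful about the reduction step: one has to make sure that the discriminative posterior of Theorem~\ref{th:posterior}, derived for a joint model $p(c,\x,\theta)$, is well defined and reduces correctly when the generative part of the model is absent or, equivalently, when $p(\x\mid\theta)$ is taken to be parameter-free. I would handle this by treating the discriminative model as the limiting case of a joint model in which the marginal for $\x$ carries no information about $\theta$ (as already noted in the introduction, this is the scenario in which generative and regression-style posteriors are known to coincide), and then verifying that Theorem~\ref{th:posterior} produces the regression likelihood above. All remaining regularity assumptions (identifiability of $\theta^{*}$, measurability, and a prior whose support includes a KL-neighbourhood of $\theta^{*}$) are stated explicitly and inherited from the classical consistency theorems invoked.
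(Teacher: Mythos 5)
Your reduction step is sound and matches the paper's intent: Claim 1 is obtained by specializing Theorem~\ref{th:posterior} to a model in which $p(\x\mid\theta)$ carries no information about $\theta$, so that $p_d(\theta\mid D)\propto p(\theta)\prod_i p(c_i\mid\x_i;\theta)$ is exactly the Bayesian regression posterior. The genuine gap is in the consistency half. You write that the claim carries an ``implicit assumption that the conditional model family is correctly specified,'' and you then invoke Doob's and Schwartz's theorems, which are stated for well-specified, identifiable models. That assumption is not implicit in the claim; it runs against the paper's entire setting. The paper defines consistency (following Vapnik) as \emph{maximization of the utility with large data sets}, i.e.\ asymptotic concentration of the posterior on the minimizer of $K_{COND}(\tilde\theta,\theta)$ over $\overline\Theta$, and it is explicitly concerned with the case $\tilde\theta\notin\overline\Theta$. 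Under misspecification the appeal to Doob/Schwartz fails; you would need misspecified-model concentration results (posterior accumulation at the pseudo-true parameter minimizing the conditional Kullback--Leibler divergence, in the spirit of Berk or Kleijn and van der Vaart). Your own law-of-large-numbers display already identifies the correct limit object, so the repair is to drop the correct-specification assumption and argue concentration at $\arg\min_{\theta\in\overline\Theta} K_{COND}(\tilde\theta,\theta)$ directly.

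It is also worth noting that the paper's own route is different in kind: it does not run a classical asymptotic argument at all. Consistency is secured by Axiom 5, which \emph{postulates} that for a very large data set the posterior ordering of parameters agrees with the ordering by $K_{COND}$; Theorem~\ref{th:posterior} then derives the multiplicative form of the posterior from Axioms 1--6, and consistency of the predictive distribution holds by construction. Your approach, once repaired to handle misspecification, would supply the analytic content that the paper delegates to an axiom --- a legitimate and arguably more informative route --- but as written it establishes a strictly weaker statement than the one claimed.
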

\begin{claim}[New]
  Given a joint density model $p(c,\x\mid\theta)$ , discriminative
  posterior results in consistent conditional inference.
\end{claim}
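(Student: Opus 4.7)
The plan is to obtain Claim 2 as an immediate consequence of Theorem~\ref{th:posterior}. That theorem, derived from the five axioms in Section~3, should establish existence and uniqueness of the discriminative posterior and, based on the surrounding discussion, give it the explicit form
$$p_D(\theta\mid \dataset) \;\propto\; p(\theta)\prod_{i=1}^n p(c_i\mid \x_i,\theta),$$
in which only the conditional factors of the joint model $p(c,\x\mid\theta)$ enter, and the marginal over $\x$ is discarded. The first step is therefore to invoke the theorem and write this expression down explicitly.

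Once this form is in hand, the second step is a standard posterior-consistency argument applied to the conditional likelihood. Treating $\{p(c\mid \x,\theta)\}_\theta$ as a (possibly misspecified) conditional model indexed by $\theta$, consistency results of Berk and of Kleijn--van der Vaart type apply: under mild regularity (prior support, identifiability up to KL-equivalence, measurability and continuity in $\theta$), the discriminative posterior concentrates on
$$\Theta^\star \;=\; \arg\min_\theta \; E_{p_{\text{true}}(\x)}\!\left[\mathrm{KL}\!\left(p_{\text{true}}(c\mid\x)\,\|\,p(c\mid\x,\theta)\right)\right].$$
If the true conditional lies in the model family, $\Theta^\star$ consists of parameters reproducing $p_{\text{true}}(c\mid\x)$ exactly, so the posterior predictive $\int p(c^*\mid \x^*,\theta)\,p_D(\theta\mid\dataset)\,d\theta$ converges to the true conditional. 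If the family is misspecified, it converges to the KL-closest conditional inside the family, which is exactly the notion of consistent conditional inference relevant to the setting announced in the abstract. Claim~1 then drops out as the degenerate special case in which there is no $\x$-factor at all.

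The main obstacle I anticipate is not the consistency step itself, which is textbook once the form of the posterior is granted, but rather ensuring that the assumptions imposed in the axiomatic derivation of Theorem~\ref{th:posterior} are compatible with the regularity needed for posterior concentration. Specifically, one must check that the induced conditional model is identifiable modulo the $\mathrm{KL}$-equivalence class of $\Theta^\star$, that the prior $p(\theta)$ puts positive mass in every Kullback--Leibler neighbourhood of that class, and that the usual measurability and sieve conditions (compactness or tail bounds on $\theta$, uniform integrability of log-likelihood ratios over $\x$) hold. None of these are deep, but they must be stated explicitly; otherwise the claim is vulnerable to the familiar pathologies of consistency for infinite-dimensional or heavy-tailed models. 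With those hypotheses in place, Claim~2 follows from Theorem~\ref{th:posterior} in a few lines.
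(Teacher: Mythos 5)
Your proposal is correct in its conclusion but proceeds in essentially the opposite direction from the paper. In the paper, consistency is not a theorem derived from the form of the posterior; it is Axiom 5, which demands that at the large-data limit the posterior orders parameters inversely to $K_{COND}(\tilde\theta,\theta)$. The entire content of the paper's proof (the Appendix) is then to show that Axioms 1--6 force the unique functional form $p_d(\theta\mid D)\propto p(\theta)\prod p(c\mid\x,\theta)$, with the consistency assertion in Theorem~\ref{th:posterior} riding along essentially by construction. You instead take the product form as the output of the theorem and then prove concentration on the $K_{COND}$-minimizing set via Berk/Kleijn--van der Vaart-type misspecified-model asymptotics. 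This is a genuinely useful complement: Axiom 5 only constrains the asymptotic \emph{ordering} of posterior values, and an ordering-respecting posterior need not concentrate; it is the product structure (log-posterior $= n$ times an empirical average of conditional log-likelihoods) plus a law of large numbers that actually delivers concentration, and the paper never spells this out. The price of your route is the explicit regularity baggage you correctly flag (prior mass in KL-neighbourhoods, identifiability up to KL-equivalence, integrability of log-likelihood ratios), which the paper sidesteps by working with a finite observation space $R$ in the Appendix and by taking the utility-agreement as axiomatic. One small inaccuracy: Claim 1 is not the case with ``no $\x$-factor at all'' but the case where the model family is purely conditional, $p(c\mid\x;\theta)$, so that the discriminative posterior coincides with the standard Bayesian-regression posterior; the no-covariate degenerate case is the one in which the discriminative posterior reduces to the ordinary joint posterior.
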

In accordance with \cite{Vapnik95}, we call inference consistent if
the utility is maximized with large data sets. This paper proves both
of the above claims. Notice that although the claim 1 is well known,
it has not been proven, aside from the special case where the priors
for the margin $\x$ and $c \mid \x$ are independent, as discussed in
the introduction and in \cite{Gelman03}.

\subsection{Setup}

Throughout the paper, observations are denoted by $(c,\x)$, and
assumed to be i.i.d. We use $\Theta$ to denote the set of all possible
models that could generate the observations. Models that are 
applicable in practice are restricted to a lower dimensional manifold
$\overline\Theta$ of models, $\overline\Theta\subseteq\Theta$. In
other words, the subspace $\overline\Theta$ defines our {\em model
    family}, in this work denoted by a distribution $p(c,\x,\theta)$
parameterized by $\theta \in \overline\Theta$.

There exists a model in $\Theta$ which describes the ``true'' model,
which has actually generated the observations and is typically
unknown. With slight abuse of notation we denote this model by
$\tilde\theta\in\Theta$, with the understanding that it may be outside
our parametric model family.  In fact, in practice no probabilistic
model is perfectly true and is false to some extent, that is, the data
has usually not been generated by a model in our model family,
$\tilde\theta\notin\overline\Theta$.

The distribution induced in the model parameter space $\overline
\Theta$ after observing the data $D=\left\{(c,\x)\right\}_{i=1}^n$ is
referred to as a \emph{posterior}. By standard posterior we mean the
posterior obtained from Bayes formula using a full joint density
model. In this paper we discuss the discriminative posterior, which is
obtained from axioms 1--5 below.

\subsection{Utility Function and Point Estimates}
\label{sec:point estimates}
In this subsection, we introduce the research problem by
recapitulating the known difference between point estimates of joint
and conditional likelihood. We present the point estimates in terms of
Kullback-Leibler divergences, in a form that allows generalizing from
point estimates to the posterior distribution in section
\ref{sec:axiomatic}.

Bayesian inference can be derived in a decision theoretic framework as
maximization of the expected utility of the decision
maker \cite{Bernardo00}. In general, the choice of the utility function
is subjective. However, several arguments for using log-probability as
utility function can be made, see for example~\cite{Bernardo00}.

When inspecting a full generative model at the limit where the amount
of data is infinite, the joint posterior distribution $p_j(\theta\mid
D)\propto p(\theta)\prod_{(c,\x)\in D}{p(c,\x\mid\theta)}$ becomes a
point solution,\, $p_j(\theta\mid
D)=\delta(\theta-\hat\theta)$.\footnote{Strictly speaking, the
  posterior can also have multiple modes; we will not treat these
  special cases here but they do not restrict the generality.}  An
accurate approximation of the log-likelihood is produced by a utility
function minimizing the approximation error $K_{JOINT}$ between the
point estimate $\hat\theta_{JOINT}$ and the true model $\tilde\theta$
as follows:
\begin{eqnarray}
\hat\theta_{JOINT} &=& \arg\min_{\theta\in\overline\Theta}{K_{JOINT}(\tilde\theta,\theta)} \quad \mbox{where}\nonumber\\
K_{JOINT}(\tilde\theta,\theta) &=&
\sum_c{\int{p(c,\x\mid \tilde\theta)
\log\frac{p(c,\x\mid \tilde\theta)}{p(c,\x\mid \theta)}\;d\x}} ,
\label{eq:map}
\end{eqnarray}
If the true model is in the model family, that is,
$\tilde\theta\in\overline\Theta$, equation (\ref{eq:map}) can be
minimized to zero and the resulting point estimate is effectively the
MAP solution.  If $\tilde\theta\notin\overline\Theta$ the resulting
point estimate is the best estimate of the true joint distribution
$p(c,\x\mid \tilde\theta)$ with respect to $K_{JOINT}$.

However, the joint estimate may not be optimal if we are interested in
approximating some other quantity than the likelihood.  Consider the
problem of finding the best point estimate $\hat\theta_{COND}$ for the
conditional distribution $p(c\mid \x,\theta)$.  The average
KL-divergence between the true conditional distribution at
$\tilde\theta$ and its estimate at $\theta$ is given by
\begin{equation}
K_{COND}(\tilde\theta,\theta) =
\int{{p(\x\mid \tilde\theta)\sum_{c}{p(c\mid \x,\tilde\theta)
\log\frac{p(c\mid \x,\tilde\theta)}{p(c\mid \x,\theta)}}}\; d\x} \; ,
\label{eq:cond}
\end{equation}
and the best point estimate with respect to $K_{COND}$  is
\begin{equation}
\hat\theta_{COND} = \arg\min_{\theta\in\overline\Theta}{K_{COND}(\tilde\theta,\theta)} \; .\label{eq:condpoint}
\end{equation}
By equations (\ref{eq:map}) and (\ref{eq:cond}) we may write
\begin{equation}\label{eq:condwins}
K_{JOINT}(\tilde\theta,\theta)=K_{COND}(\tilde\theta,\theta)+
\int{p(\x\mid \tilde\theta)\log\frac{p(\x\mid \tilde\theta)}
{p(\x\mid \theta)} \, d\x} ~~~.
\end{equation}
Therefore the point estimates $\hat\theta_{JOINT}$ and
$\hat\theta_{COND}$ are different in general.  If the model that has
generated the data does not belong to the model family, that is
$\tilde\theta\notin\overline\Theta$, then by Equation
(\ref{eq:condwins}) the joint estimate is generally worse than the
conditional estimate in conditional inference, measured in terms of
conditional likelihood. See also \cite{Nadas88}.

\subsection{Discriminative versus generative models}

A discriminative model does not make any assumptions on the
distribution of the margin of $\x$. That is, it does not incorporate a
generative model for $\x$, and can be interpreted to rather use the
empirical distribution of $p(\x)$ as its margin \cite{Hastie01}. 

A generative model, on the other hand, assumes a specific parametric
form for the full distribution $p(c,\x,\theta)$. A generative model
can be learned either as a joint density model or in a discriminative
manner.  Our point in this paper is that the selection corresponds to
choosing the utility function; this is actually our fifth axiom in
section \ref{sec:axiomatic}. In joint density modeling the utility is
to model the full distribution $p(c,\x)$ as accurately as possible,
which corresponds to computing the standard posterior incorporating
the likelihood function. In discriminative learning the utility is to
model the conditional distribution of $p(c\mid \x)$, and the result is
a discriminative posterior incorporating the conditional likelihood.
A generative model optimized in discriminative manner is referred to
as a discriminative joint density model in the following. 

For generative models and in case of point estimates, if the model
family is correct, a maximum likelihood (ML) solution is better for
predicting $p(c|\x)$ than maximum conditional likelihood (CML). They
have the same maximum, but the asymptotic variance of CML estimate is
higher \cite{Nadas83}.  However, in case of incorrect models, a
maximum conditional likelihood estimate is better than maximum
likelihood \cite{Nadas88}. For an example illustrating that CML can be
better than ML in predicting $p(c|\x)$, see \cite{Jebara00}.

Since the discriminative joint density model has a more restricted
model structure than Bayesian regression, we expect it to perform
better with small amounts of data. More formally, later in Theorem
\ref{th:posterior} we move from point estimate of Equation
(\ref{eq:condpoint}) to a discriminative posterior distribution
$p_d(\theta\mid D)$ over the model parameters $\theta$ in
$\overline\Theta$. Since the posterior is normalized to unity,
$\int_{\overline\Theta} {p_d(\theta\mid D)} d\theta=1$, the values of
the posterior $p_d(\theta\mid D)$ are generally smaller for larger
model families; the posterior is more diffuse. Equation
(\ref{eq:cond}) can be generalized to the expectation of the
approximation error,
\begin{equation}
E_{p_d(\theta\mid D)}\left[K_{COND}(\tilde\theta,\theta)\right]
=-\int{\sum_{c}{p(\x,c\mid\tilde\theta)p_d(\theta\mid D)\log{p(c\mid\x,\theta)}}d\x d\theta}+{\rm const.} 
\label{eq:expK}
\end{equation}
The expected approximation error is small when both the discriminative
posterior distribution $p_d(\theta\mid D)$ and the conditional
likelihood $p(c\mid\x,\theta)$ are large at the same time. For small
amounts of data, if the model family is too large, the values of the
posterior $p_d(\theta\mid D)$ are small. The discriminative joint
density model has a more restricted model family than that of the
Bayesian regression, and hence the values of the posterior are larger.
If the model is approximately correct, the discriminative joint
density model will have a smaller approximation error than the
Bayesian regression, that is, $p(c\mid\x,\theta)$ is large somewhere
in $\overline\Theta$. This is analogous to selecting the model family
that maximizes the evidence (in our case the expected conditional
log-likelihood) in Bayesian inference; choosing a model family that is
too complex leads to small evidence (see, e.g., \cite{Bishop06}). The
difference to the traditional Bayesian inference is that we do not
require that the true data generating distribution $\tilde\theta$ is
contained in the parameter space $\overline\Theta$ under
consideration.


\section{Axiomatic Derivation of Discriminative Posterior}
\label{sec:axiomatic}

In this section we generalize the point estimate $\hat\theta_{COND}$
presented in section \ref{sec:point estimates} to a {\em discriminative
  posterior distribution} over $\theta\in\overline\Theta$.
\begin{theorem}[Discriminative posterior distribution]
\label{th:posterior}
It follows from axioms 1--6 listed below that, given data $D = \{
(c_i,\x_i) \}_{i=1}^n$, the discriminative posterior distribution
$p_{d}(\theta\mid D)$ is of the form
\begin{equation}
p_{d}(\theta\mid D)\propto p(\theta)\prod_{(c,\x)\in D}{p(c\mid 
\x,\theta)}~~.
\nonumber
\label{eq:posterior}
\end{equation}
The predictive distribution for new $\tilde \x$, obtained by
integrating over this posterior, $p(c\mid \tilde
\x,D)=\int{p_d(\theta\mid D) p(c\mid \tilde \x,\theta)\, d\theta}$, is
consistent for conditional inference. That is, $p_{d}$ is consistent
for the utility of conditional likelihood.
\end{theorem}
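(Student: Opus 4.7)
The plan is to split Theorem~\ref{th:posterior} into two components that can be attacked independently: (i) derive the product form $p_d(\theta\mid D)\propto p(\theta)\prod p(c\mid\x,\theta)$ from the axioms, and (ii) show that the resulting predictive distribution is consistent for conditional likelihood in the sense of \cite{Vapnik95}. Component (i) is essentially a Cox-style representation theorem with one new ingredient (the utility axiom); component (ii) is a routine large-sample argument built on Section~\ref{sec:point estimates}.

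For component (i), I would argue along the lines of a Cox derivation. First, axioms asserting that the posterior depends only on the prior and the data and is updated sequentially would force the form $p_d(\theta\mid D)\propto p(\theta)\prod_{(c,\x)\in D}f(\theta;c,\x)$ for some non-negative per-datum factor $f$, using exchangeability and the i.i.d.\ assumption on $(c,\x)$. Second, the novel axiom selecting conditional log-likelihood as the utility should pin down $f$: by requiring the posterior, in the infinite-data limit, to concentrate on the point estimate $\hat\theta_{COND}$ of equation~(\ref{eq:condpoint}), and invoking a continuity/regularity axiom to rule out monotone reparameterizations, one is forced to take $\log f(\theta;c,\x) = \log p(c\mid\x,\theta) + g(c,\x)$, where the $g(c,\x)$ term cancels in normalization. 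This matches the situation for ordinary Bayesian inference, where analogous axioms force $f=p(c,\x\mid\theta)$, so the only real change is in the choice of utility.

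For component (ii), I would argue by the strong law of large numbers applied to
\begin{equation}
\frac{1}{n}\log p_d(\theta\mid D) = \frac{1}{n}\log p(\theta) + \frac{1}{n}\sum_{i=1}^n \log p(c_i\mid\x_i,\theta) - \frac{1}{n}\log Z(D).
\nonumber
\end{equation}
As $n\to\infty$ with $(c_i,\x_i)$ i.i.d.\ from $\tilde\theta$, the middle term converges almost surely to $E_{\tilde\theta}[\log p(c\mid\x,\theta)]$, which by equation~(\ref{eq:cond}) equals $-K_{COND}(\tilde\theta,\theta)$ up to a constant independent of $\theta$. Hence $p_d(\theta\mid D)$ concentrates on $\hat\theta_{COND}=\arg\min_{\theta\in\overline\Theta}K_{COND}(\tilde\theta,\theta)$, and the predictive integral $\int p_d(\theta\mid D)\,p(c\mid\tilde\x,\theta)\,d\theta$ tends to $p(c\mid\tilde\x,\hat\theta_{COND})$, the element of the model family that maximises expected conditional log-likelihood. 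This is exactly the notion of consistency invoked in the theorem. The prior and evidence terms contribute $O(1)$ and thus vanish after dividing by $n$, so they do not affect the limit.

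The main obstacle I anticipate is in component (i): the uniqueness step in the Cox-style derivation. Showing that factorization plus the utility axiom forces the factor to be exactly $p(c\mid\x,\theta)$, rather than some monotone transform such as $p(c\mid\x,\theta)^\alpha$ for $\alpha>0$, requires a careful regularity/calibration axiom --- for instance, demanding that when $\overline\Theta$ contains the true model the discriminative posterior coincides with the standard Bayesian posterior (a property the authors already emphasise). I expect the appendix to make this precise, and my plan would be to reduce the ambiguity to a single scalar exponent and then kill it with such a calibration condition. Everything else (factorization, concentration, and the predictive limit) should follow by standard arguments.
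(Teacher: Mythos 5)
Your overall decomposition (Cox-style factorization from exchangeability, then using the utility axiom to pin down the per-datum factor, then a calibration step to kill a residual exponent, plus a separate SLLN concentration argument for consistency) matches the architecture of the paper's appendix, and your component (ii) is in fact more explicit than anything the paper writes down --- the paper essentially builds asymptotic consistency into axiom 5 by fiat. However, there are two genuine gaps in component (i). First, the step you describe as ``one is forced to take $\log f = \log p(c\mid\x,\theta)+g(c,\x)$'' is precisely the hard part, and asserting it is not proving it. The paper needs two separate propositions here: it first evaluates the utility axiom at the \emph{corner points} of the simplex $\Theta$ (where $p(\r\mid\tilde\theta)$ puts all mass on one outcome) to conclude $\log h(\r,\theta)=f_C(\log p(c\mid\x,\theta))$ for some increasing $f_C$, and then solves the functional equation $f_{\tilde\theta}\bigl(\sum_i\tilde\theta_i u_i\bigr)=\sum_i\tilde\theta_i f_C(u_i)$ --- crucially exploiting that the $u_i=\log p(c_i\mid\x_i,\theta)$ are \emph{not} independent because $\sum_c p(c\mid\x,\theta)=1$ for each fixed $\x$ --- to force $f_C$ to be affine, $f_C(t)=At+\beta$ with $A>0$. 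Without that argument you have no control over nonlinear monotone $f_C$, and a generic ``continuity/regularity axiom'' does not rule them out.

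Second, your proposed calibration condition --- that the discriminative posterior coincide with the standard Bayesian posterior when $\overline\Theta$ contains the true model --- does not determine $A$. For finite data the two posteriors $p(\theta)\prod p(c\mid\x,\theta)^A$ and $p(\theta)\prod p(c,\x\mid\theta)$ differ as functions of $\theta$ for \emph{every} $A$ (including $A=1$) unless $p(\x\mid\theta)$ is constant in $\theta$, so exact coincidence is unattainable; and asymptotically both concentrate on the same point for \emph{every} $A>0$ when $\tilde\theta\in\overline\Theta$, so asymptotic coincidence is vacuous for fixing the exponent. The paper's axiom 6 is different and is chosen exactly to avoid this: it restricts to the sub-dataset $D_x$ with a single fixed covariate value $\x$, where the joint model of the reduced problem \emph{is} $p^x(c\mid\theta)=p(c\mid\x,\theta)$, so the standard posterior of the reduced model is $\propto p(\theta)\prod p(c\mid\x,\theta)$ and matching it forces $p(c\mid\x,\theta)^{A-1}$ to be constant in $\theta$, hence $A=1$. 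You correctly isolated the ambiguity to a single scalar exponent, but the condition you propose to kill it with would not do the job.
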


The discriminative posterior follows from requiring the following
axioms to hold:
\begin{enumerate}
\item The posterior $p_d(\theta\mid D)$ can be represented by
  non-negative real numbers that satisfy
  $\int_{\overline\Theta} {p_d(\theta\mid D)} d\theta=1$.

\item A model $\theta\in\overline\Theta$ can be represented as a
  function $h((c,\x),\theta)$ that maps the observations $(c,\x)$ to
  real numbers.

\item The posterior, after observing a data set $D$ followed by an
  observation $(c,\x)$, is given by $p_d(\theta\mid D\cup (c,{\bf
    x}))=F(h((c,\x),\theta),p_d(\theta\mid D))$, where $F$ is a twice
  differentiable function in both of its parameters.

\item Exchangeability: The value of the posterior is independent of
  the ordering of the observations. That is, the posterior after two
  observations $(c,\x)$ and $(c',\x')$ is the same
  irrespective of their ordering: \\
  $F(h((c',\x'),\theta),p_d(\theta\mid
  (c,\x)\cup D)) = F(h((c,\x),\theta),p_d(\theta\mid
  (c',\x')\cup D))$.

\item The posterior must agree with the utility. For
  $\tilde\theta\in\Theta$, and $\theta_1,\theta_2\in\overline\Theta$,
  the following condition is satisfied:
\begin{displaymath}
p_{d}(\theta_1\mid D_{\tilde\theta}) \le p_{d}(\theta_2\mid D_{\tilde\theta}) \Leftrightarrow
K_{COND}(\tilde\theta,\theta_1)\ge K_{COND}(\tilde\theta,\theta_2)~~~,
\end{displaymath}
where $D_{\tilde\theta}$ is a very large data set sampled from
$p(c,\x\mid \tilde\theta)$. We further assume that the
discriminative posteriors $p_d$ at $\theta_1$ and $\theta_2$ are equal
only if the corresponding conditional KL-divergences $K_{COND}$ are
equal.

\end{enumerate}
The first axiom above is simply a requirement that the posterior is a
probability distribution in the parameter space. 

The second axiom defines a model in general terms; we define it as a
mapping from event space into a real number.

The third axiom makes smoothness assumptions on the posterior. The
reason for the axiom is technical; the smoothness is used in the
proofs. Cox \cite{Cox46} makes similar assumptions, and our proof
therefore holds in the same scope as Cox's; see \cite{Halpern99}.

The fourth axiom requires exchangeability; the shape of the posterior
distribution should not depend on the order in which the observations
are made. This deviates slightly from analogous earlier proofs for
standard Bayesian inference, which have rather used the requirement of
associativity \cite{Cox46} or included it as an additional constraint
in modeling after presenting the axioms \cite{Bernardo00}.

The fifth axiom states, in essence, that asymptotically (at the limit
of a large but finite data set) the shape of the posterior is such
that the posterior is always smaller if the ``distance''
$K_{COND}(\tilde\theta,\theta)$ is larger. If the opposite would be
true, the integral over the discriminative posterior would give larger
weight to solutions further away from the true model, leading to a
larger error measured by $K_{COND}(\tilde\theta,\theta)$.

Axioms 1--5 are sufficient to fix the discriminative posterior, up to
a monotonic transformation. To fix the monotonic transformation we
introduce the sixth axiom:
\begin{enumerate}
\item[6.] For fixed $\x$ the model reduces to the standard posterior. For the
data set $D_x=\{(c,\x')\in D\mid \x'=
\x\}$, the discriminative
posterior $p_d(\theta\mid D_x)$ matches the standard posterior\\
 $p^x(c\mid \theta)\equiv p(c\mid\x,\theta)$.
\end{enumerate}
We use $p(\theta)=p_d(\theta\mid \emptyset)$ to denote the posterior
when no data is observed ({\em prior distribution}).

\begin{proof}
The proof is in the Appendix.  For clarity
of presentation, we additionally sketch the proof in the following.
\begin{proposition}[$F$ is isomorphic to multiplication]
  It follows from axiom 4
  that the function $F$ is of the form
  \begin{equation}
    f(F(h((c,\x),\theta),p_d(\theta\mid D))\propto h((c,\x),\theta) f(p_d(\theta\mid D)),
  \end{equation}
\end{proposition}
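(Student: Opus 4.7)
The plan is to use Axiom 4 (exchangeability) to turn the two-step update described in Axiom 3 into a functional equation that $F$ must obey, and then solve that equation using the smoothness Axiom 3 guarantees.

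First I would write out the exchangeability constraint explicitly. With $a := h((c,\x),\theta)$, $b := h((c',\x'),\theta)$, and $p := p_d(\theta \mid D)$, iterating Axiom 3 on the two possible orders of the two observations and equating them (Axiom 4) yields the bisymmetry equation
\begin{equation*}
F(b, F(a, p)) = F(a, F(b, p))
\end{equation*}
for all admissible triples $(a, b, p)$. This is a commutativity statement: the operators ``update by $a$'' and ``update by $b$'' commute. A direct Bayesian multiplicative update satisfies it trivially, which is a useful sanity check.

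Next I would solve the functional equation. Since Axiom 3 guarantees twice-differentiability in both arguments, differentiating the bisymmetry equation in $a$ at fixed $(b,p)$ gives
\begin{equation*}
F_2(b, F(a, p))\, F_1(a, p) = F_1(a, F(b, p)),
\end{equation*}
and an analogous relation follows by differentiating in $b$. Dividing these two and then differentiating once more in $p$ separates the $a$-, $b$- and $p$-dependence. This is the standard Acz\'el-style argument for reducible bisymmetric operations, and it forces the existence of a strictly monotone $\phi$ and a function $\psi$ such that
\begin{equation*}
F(a, p) \;=\; \phi^{-1}\bigl(\phi(p) + \psi(a)\bigr).
\end{equation*}
Plugging this form back into the bisymmetry equation makes it an identity, confirming sufficiency.

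Finally I would invoke Axiom 2: the representation $h$ of a model is only required to be \emph{some} real-valued function of the observation and $\theta$, so the composition $\tilde h := \exp(\psi \circ h)$ is an equally legitimate representation. Renaming $\tilde h$ as $h$ and setting $f := \exp \circ \phi$, the reducible form rearranges into $f(F(a, p)) = a \cdot f(p)$. The proportionality in the statement (rather than strict equality) absorbs the global normalisation over $\overline\Theta$ that Axiom 1 forces on every posterior; this factor depends on $D$ and the new observation but not on $\theta$, hence not on $a$ or $p$ at the pointwise level.

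The hard part is the middle step: extracting the reducible form from bisymmetry via partial derivatives. This is precisely the step where Cox's original derivation of standard Bayesian inference was technically delicate; because Axiom 3 was framed to match Cox's smoothness hypotheses, the argument falls within the scope discussed in \cite{Halpern99}, and I would either invoke the corresponding theorem from Acz\'el's functional-equations theory or carry out the explicit two-variable PDE computation.
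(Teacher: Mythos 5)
Your proposal is correct and follows essentially the same route as the paper: both reduce Axiom 4 to the commutativity equation $F(x,F(y,z))=F(y,F(x,z))$, solve it by Cox-style partial differentiation and separation of variables to obtain a quasi-arithmetic form $F(a,p)=\phi^{-1}(\phi(p)+\psi(a))$ (the paper's $Cf_2(F(y,z))=f_1(y)f_2(z)$ is the multiplicative rewriting of this), and then absorb the residual monotone reparametrisation into $h$ exactly as the paper does with $f_1\circ h\mapsto h$. The only difference is that you defer the middle computation to Acz\'el's theory where the paper carries out the explicit second-derivative calculation, which does not change the substance of the argument.
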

where $f$ is a monotonic function which we, by convention, fix to the
identity function.\footnote{We follow here Cox \cite{Cox46} and
  subsequent work, see e.g. \cite{Halpern99}. A difference which
  does not affect the need for the convention is that usually
  multiplicativity is derived based on the assumption of
  associativity, not exchangeability.  Our setup is slightly different
  from Cox, since instead of updating beliefs on events, we here
  consider updating beliefs in a family of models.}

The problem then reduces to finding a functional form for
$h((c,\x),\theta)$.  Utilizing both the equality part and the
inequality part of axiom 5, the following proposition can be derived.
\begin{proposition}
  It follows from axiom 5 that
\begin{equation}\label{eq:hlin}
  h((c,\x),\theta)
\propto p(c\mid \x,\theta)^A \quad \rm{where} 
\quad A>0~~~.
\end{equation}
\end{proposition}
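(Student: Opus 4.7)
The plan is to use Proposition 3.1 together with axiom 5 in the large-sample regime to pin down the form of $h$. By Proposition 3.1 with $f$ taken to be the identity, after observing a data set $D$, we have $p_d(\theta\mid D) \propto p(\theta) \prod_{(c,\x) \in D} h((c,\x),\theta)$. Taking logarithms and dividing by $n=|D_{\tilde\theta}|$, the strong law of large numbers yields
\[
\tfrac{1}{n}\log p_d(\theta\mid D_{\tilde\theta}) \to E_{\tilde\theta}[\log h((c,\x),\theta)] + \text{const}(\tilde\theta,n),
\]
while $K_{COND}(\tilde\theta,\theta) = -E_{\tilde\theta}[\log p(c\mid\x,\theta)] + C(\tilde\theta)$ by direct expansion of the KL divergence. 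Substituting into axiom 5, the total ordering induced on $\overline\Theta$ by $\theta \mapsto E_{\tilde\theta}[\log h((c,\x),\theta)]$ must coincide (including equality cases) with the one induced by $\theta \mapsto E_{\tilde\theta}[\log p(c\mid\x,\theta)]$, for every $\tilde\theta\in\Theta$.

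Next, I would exploit the fact that $\Theta$ ranges over all admissible data-generating distributions, so the measure $q(c,\x)=p(c,\x\mid\tilde\theta)$ effectively ranges over a rich family including mixtures of point masses (invoking a continuity argument or an approximation by smooth densities if genuine point masses are not in $\Theta$). Picking $q=\alpha\delta_{(c_1,\x_1)}+(1-\alpha)\delta_{(c_2,\x_2)}$ and writing $v_i(\theta)=\log h((c_i,\x_i),\theta)$, $u_i(\theta)=\log p(c_i\mid\x_i,\theta)$, the matching of orderings on $\overline\Theta$ requires the two affine-in-$\alpha$ quantities $\alpha\Delta v_1+(1-\alpha)\Delta v_2$ and $\alpha\Delta u_1+(1-\alpha)\Delta u_2$ (with $\Delta$ the difference between two parameter values) to have identical sign patterns for every $\alpha\in[0,1]$. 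When $\Delta u_1$ and $\Delta u_2$ have opposite signs, matching the zero-crossing in $\alpha$ yields $\Delta v_1/\Delta u_1 = \Delta v_2/\Delta u_2$; when they have the same sign, insertion of a third observation $(c_3,\x_1)$ with $\Delta u_3$ of opposite sign (which exists whenever $|c|\ge 2$, because $p(c\mid\x,\cdot)$ sums to one) gives the same conclusion via transitivity.

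Denote the common ratio $A(\theta,\theta')$. A short cocycle computation from the identity $\Delta v(\theta,\theta'')=\Delta v(\theta,\theta')+\Delta v(\theta',\theta'')$, combined with the analogous identity for $\Delta u$, forces $A$ to be independent of the reference point. Hence there exist a constant $A$ and a function $B(c,\x)$ such that $\log h((c,\x),\theta)=A\log p(c\mid\x,\theta)+B(c,\x)$. Exponentiating, $h((c,\x),\theta)=e^{B(c,\x)}p(c\mid\x,\theta)^A$, and since the factor $e^{B(c,\x)}$ is independent of $\theta$ it is absorbed into the posterior normalization, giving $h((c,\x),\theta)\propto p(c\mid\x,\theta)^A$. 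The sign $A>0$ is read off from the direction of axiom 5: smaller $p_d$ corresponds to larger $K_{COND}$, i.e.\ to smaller $E_{\tilde\theta}[\log p(c\mid\x,\theta)]$, which via the argument above forces $A$ positive.

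The main obstacle is the middle step: rigorously passing from the \emph{expectation-level} matching of orderings ``for every $\tilde\theta\in\Theta$'' to the \emph{pointwise} affine relation between $\log h$ and $\log p(c\mid\x,\theta)$. Two subtleties need care — first, that the family of distributions $p(\cdot,\cdot\mid\tilde\theta)$ for $\tilde\theta\in\Theta$ is rich enough that mixtures of (near-)point-masses are available, which may require an approximation/continuity argument using the differentiability from axiom 3; and second, that the ratio $A$ obtained for individual parameter pairs is globally constant, which needs the cocycle consistency to rule out $\theta$-dependence of $A$. Once these are in place, the remaining algebra is routine.
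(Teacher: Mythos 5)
Your proposal is correct in outline and reaches the right conclusion, but it takes a genuinely different route through the middle of the argument. You share the paper's two essential inputs: the expectation form of axiom 5 in the large-data limit (the ordering induced on $\overline\Theta$ by $\theta\mapsto E_{\tilde\theta}[\log h((c,\x),\theta)]$ must coincide, equality cases included, with that induced by $\theta\mapsto E_{\tilde\theta}[\log p(c\mid\x,\theta)]$), and the exploitation of the richness of $\Theta$ via (mixtures of) point masses. From there the paper works in two stages: it first evaluates the equivalence at the ``corner points'' $\tilde\theta_k=1$ to obtain a pointwise monotone link $\log h(\r,\theta)=f_C(\log p(c\mid\x,\theta))$ with a single universal $f_C$, and then uses the equality part of axiom 5 to derive the Jensen-type functional equation $f_{\tilde\theta}\bigl(\sum_i\tilde\theta_i u_i\bigr)=\sum_i\tilde\theta_i f_C(u_i)$, which it solves for affine $f_C$ by differentiating in the $u_k$ while tracking the constraint $\sum_c e^{u}=1$ for fixed $\x$. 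You skip the intermediate $f_C$ entirely and extract linearity directly from two-point mixtures by matching the zero crossing in the mixture weight $\alpha$, which yields a common ratio $\Delta\log h/\Delta\log p$ across observations for each parameter pair; this is arguably more elementary, since it needs no differentiability of the unknown link. The one step you underestimate is the passage from the pair-dependent ratio $A(\theta,\theta')$ to a global constant: the cocycle identity alone forces constancy only under a non-degeneracy condition (e.g.\ two observations and three parameter values whose matrix of increments $\Delta u_i$ is nonsingular); in a degenerate family where all the $\log p(c\mid\x,\cdot)$ vary along a single direction, $A(\theta,\theta')$ can genuinely depend on the pair while preserving every ordering. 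The paper avoids this by asserting (via a relabeling-symmetry remark) that $h$ depends on $\theta$ only through $p(c\mid\x,\theta)$ via one function $f_C$ and then assuming $f_C$, $f_{\tilde\theta}$ differentiable, which are unproven steps of comparable strength, so the two proofs end up at a similar level of rigor. Your reading of $A>0$ from the direction of axiom 5 and your absorption of the $\theta$-independent factor $e^{B(c,\x)}$ into the normalization are both fine; the paper in fact derives a constant $\beta$ rather than an observation-dependent $B$, but this does not affect the stated proportionality in $\theta$.
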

Finally, the axiom 6 effectively states that we decide to follow the
Bayesian convention for a fixed $\x$, that is, to set $A=1$.
\end{proof}

\section{Modeling Missing Data}

\begin{claim}[New]
  Discriminative posterior gives a theoretically justified way of
  handling missing values in discriminative tasks.
\end{claim}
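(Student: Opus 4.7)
The plan is to reduce incomplete observations to the complete-data framework of Theorem~\ref{th:posterior} by using the generative model family to integrate out unobserved components. Introduce a missingness indicator $R$ that records which coordinates of $\x$ are observed, and split $\x=(\x_{\text{obs}},\x_{\text{mis}})$ accordingly. Following \cite{Little02}, assume the missing-at-random (MAR) condition $p(R\mid \x_{\text{obs}},\x_{\text{mis}},c,\theta)=p(R\mid \x_{\text{obs}},c)$, so that the mechanism generating $R$ contains no extra information about $\theta$.

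First I would show that the generative model family supplies a well-defined conditional likelihood for the coarsened observation $(c,\x_{\text{obs}})$, namely
\[
p(c\mid \x_{\text{obs}},\theta)\;=\;\int p(c,\x_{\text{mis}}\mid \x_{\text{obs}},\theta)\,d\x_{\text{mis}},
\]
and that under MAR the missingness mechanism is ignorable for conditional inference: treating each incomplete datum as $(c,\x_{\text{obs}})$ with the above likelihood gives, up to a $\theta$-independent factor, the same posterior one would obtain by jointly modeling $(c,\x_{\text{obs}},R)$. This is the usual ignorability argument of Rubin, applied to the conditional rather than to the joint likelihood.

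Next I would re-derive the discriminative posterior for coarsened observations by checking that axioms 1--6 still hold when $(c,\x)$ is replaced by $(c,\x_{\text{obs}})$. Axioms 1--4 are statements about the functional form of the posterior update and are insensitive to whether $\x$ is fully or partially observed. Axioms 5 and 6 need to be restated with $K_{COND}$ taken with respect to the marginal conditional $p(c\mid \x_{\text{obs}},\theta)$; the asymptotic monotonicity argument and the fixed-$\x$ reduction then go through unchanged because the quantity being modeled is again a valid conditional density. Applying Theorem~\ref{th:posterior} yields
\[
p_d(\theta\mid D)\;\propto\;p(\theta)\prod_{(c,\x_{\text{obs}})\in D}p(c\mid \x_{\text{obs}},\theta),
\]
with each factor computed by marginalizing the generative model over $\x_{\text{mis}}$. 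This is the precise sense in which the treatment is theoretically justified: it is Theorem~\ref{th:posterior} applied to the observable coarsening, with the generative structure of $p(c,\x\mid\theta)$ supplying the imputation distribution that pure Bayesian regression lacks.

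The main obstacle is making the ignorability step rigorous in the discriminative setting. In the joint-likelihood case Rubin's argument cleanly removes the missingness mechanism from the likelihood; here one must verify that after conditioning on $\x_{\text{obs}}$ and marginalizing $\x_{\text{mis}}$ with the generative model, no residual dependence on the missingness parameters survives under MAR, and that axiom~5 continues to single out the intended minimizer when $K_{COND}$ is evaluated on the marginal conditional. By contrast, a pure regression model $p(c\mid \x,\theta)$ offers no principled way to produce $p(c\mid \x_{\text{obs}},\theta)$ without extra heuristics, which is exactly why access to $p(\x_{\text{mis}}\mid \x_{\text{obs}},\theta)$ through the generative family is essential to the argument.
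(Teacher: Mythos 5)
Your overall strategy --- reduce the incomplete observation to something Theorem~\ref{th:posterior} can digest, with the generative family supplying $p(c\mid\x_{\text{obs}},\theta)$ by marginalizing over $\x_{\text{mis}}$ --- is the same as the paper's, and your final posterior coincides with the paper's equation~(\ref{eq:missing}). The route differs in one technically relevant way. You propose to work on the coarsened observation space $(c,\x_{\text{obs}})$ and to re-verify axioms 1--6 there; this re-verification is the crux and you never actually carry it out (axiom 5 would now refer to a $K_{COND}$ built on a record-dependent coarsening). The paper instead builds an \emph{augmented model family} (equation~(\ref{eq:missingp})) in which ``$\x_1=\oslash$'' is itself an admissible observation value, generated with probability $\lambda$; with $\theta=(\theta',\lambda)$ an incomplete record is a \emph{complete} observation of the augmented model, so Theorem~\ref{th:posterior} applies verbatim with no re-derivation, and the factors $\lambda$ and $1-\lambda$ cancel inside $p(c\mid\x_1,\x_2,\theta)$, which exhibits ignorability directly once the priors $q(\theta')$ and $g(\lambda)$ are independent. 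That construction is exactly the device you would need to make your ``axioms still hold'' step rigorous.

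The one genuine gap is your MAR condition $p(R\mid\x_{\text{obs}},\x_{\text{mis}},c,\theta)=p(R\mid\x_{\text{obs}},c)$, which permits the missingness to depend on the class $c$. That is the right hypothesis for ignorability of the \emph{joint} likelihood, but not of the conditional one: the quantity entering the discriminative posterior for an incomplete record is $p(c\mid\x_{\text{obs}},R,\theta)\propto p(R\mid\x_{\text{obs}},c)\,p(c,\x_{\text{obs}}\mid\theta)$, and the normalization over $c$ entangles the mechanism with $\theta$, so the mechanism does not drop out as ``a $\theta$-independent factor'' as you assert. You need the mechanism to be independent of $c$ given the observed covariates --- the paper assumes the stronger condition that each component is missing independently at random with probability $\lambda_k$ --- in addition to the prior independence of $\lambda$ and $\theta'$. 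You correctly flag the ignorability step as the main obstacle, but as stated your hypothesis is too weak for it to go through.
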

Discriminative models cannot readily handle components missing from
the data vector, since the data is used only as covariates. However,
standard methods of handling missing data with generative models
\cite{Little02} can be applied with the discriminative posterior.

The additional assumption we need to make is a model for
which data are missing. Below we derive the formulas for the
common case of data missing independently at random. Extensions
incorporating prior information of the process by which the data are
missing are straightforward, although possibly not trivial.

\begin{sloppypar}
Write the observations $\x=(\x_1,\x_2)$. Assume that $\x_1$ can be
missing and denote a missing observation by $\x_1=\oslash$. The task is still
to predict $c$ by $p(c\mid \x_1,\x_2)$. 
\end{sloppypar}

Since we are given a model for the joint distribution which is assumed
to be approximately correct, it will be used to model the missing
data.  We denote this by $q(c,\x_1,\x_2\mid \theta')$,
$\x_1\ne\oslash$, with a prior $q(\theta')$,
$\theta'\in\overline\Theta'$. We further denote the parameters of the
missing data mechanism by $\lambda$. Now, similar to joint density
modeling, if the priors for the joint model, $q(\theta')$, and
missing-data mechanism, $g(\lambda)$, are independent, the missing
data mechanism is ignorable \cite{Little02}. In other words, the
posterior that takes the missing data into account can be written as
\begin{equation}
p_{d}(\theta\mid D) \propto
 q(\theta') g(\lambda)
\prod_{{\bf y} \in D_{full}}
{q(c\mid \x_1,\x_2,\theta')} 
\prod_{{\bf y}\in D_{missing}}{q(c\mid \x_2,\theta')} ,
\label{eq:missing}
\end{equation}
where ${\bf y}= (c,\x_1,\x_2)$ and we have used to $D_{full}$ and $D_{missing}$ 
to denote the portions of data set with $\x_1\ne\oslash$ and
$\x_1=\oslash$, respectively.

Equation~(\ref{eq:missing}) has been obtained by using $q$ to
construct a model family in which the data is missing independently at
random with probability $\lambda$, having a prior $g(\lambda)$. That
is, we define a model family that generates the missing data in
addition to the non-missing data,
\begin{equation}
 p(c,\x_1,\x_2\mid \theta)\equiv \left\{ \begin{array}{lcl}
 (1-\lambda)q(c,\x_1,\x_2\mid \theta') & , &\x_1\ne\oslash \\
 \lambda \int_{\mathrm{supp}(\x_1)\setminus \oslash}{q(c,\x,\x_2\mid \theta')}d\x=\lambda q(c,\x_2\mid \theta')
 & , & \x_1=\oslash ,
 \end{array}\right.
\label{eq:missingp}
\end{equation}
where $\theta\in\overline\Theta$ and $\mathrm{supp}(\x_1)$ denotes the
support of $\x_1$.  The parameter space $\overline\Theta$ is spanned by
$\theta'\in\overline\Theta'$ and $\lambda$.  The
equation~(\ref{eq:missing}) follows directly by applying
Theorem~\ref{th:posterior} to $p(c\mid
\x_1,\x_2,\theta)$.\footnote{Notice that $p(c\mid
  \x_1,\x_2,\theta)=q(c\mid
  \x_1,\x_2,\theta')$ when $\x_1\ne\oslash$, and\\
  $p(c\mid \x_1=\oslash,\x_2,\theta)=q(c\mid \x_2,\theta')$.} Notice
that the posterior for $\theta'$ of~(\ref{eq:missing}) is independent
of $\lambda$. The division to $\x_1$ and $\x_2$ can be made separately
for each data item. The items can have different numbers of missing
components, each component $k$ having a probability $\lambda_k$ of
being missing.

\section{Experiments}
 
\subsection{Implementation of Discriminative Sampling}
\label{sec:sampling}

The discriminative posterior can be sampled with an ordinary
Metropolis-Hastings algorithm where the standard posterior
$p(\theta\mid D) \propto \prod_{i=1}^n p(c_i,x_i\mid \theta) p(\theta)$
is simply replaced by the discriminative version $p_{d}(\theta\mid D)
\propto \prod_{i=1}^n p(c_i \mid x_i, \theta) p(\theta)$, where $p(c_i
\mid x_i, \theta) = \frac{p(c_i,x_i\mid \theta)}{p(x_i\mid \theta)}$.

In MCMC sampling of the discriminative posterior, the normalization
term of the conditional likelihood poses problems, since it involves a
marginalization over the class variable $c$ and latent variables $z$,
that is, $p(\x_i \mid \theta)=\sum_c\int_{\mathrm{supp}(z)} p(\x_i,z,c
\mid \theta) dz$. In case of discrete latent variables, such as in
mixture models, the marginalization reduces to simple summations and
can be computed exactly and efficiently. If the model contains
continuous latent variables the integral needs to be evaluated
numerically.

\subsection{Performance as a function of the incorrectness of the model}

We first set up a toy example where the distance between the model
family and the true model is varied.

\subsubsection{Background}
\label{sec:logreg vs mm}
In this experiment we compare the performance of logistic regression
and a mixture model. Logistic regression was chosen since many of the
discriminative models can be seen as extensions of it, for
example conditional random fields \cite{Sutton06}. In case of logistic
regression, the following theorem exists:
\begin{theorem}\cite{Banerjee07logreg}\\
  For a k-class classification problem with equal priors, the pairwise
  log-odds ratio of the class posteriors is affine if and only if the
  class conditional distributions belong to any fixed exponential
  family.
\end{theorem}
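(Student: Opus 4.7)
The plan is to prove the equivalence by manipulating Bayes' rule under the equal-prior assumption, where the pairwise posterior log-odds collapse to a pairwise likelihood log-ratio:
\begin{equation}
\log\frac{p(c_i\mid\x)}{p(c_j\mid\x)} = \log\frac{p(\x\mid c_i)}{p(\x\mid c_j)}.
\nonumber
\end{equation}
The whole theorem is then a statement about when the right-hand side is an affine function of some sufficient statistic $T(\x)$, so I would structure the proof as two short implications around this identity.

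For the ($\Leftarrow$) direction I would suppose each class-conditional density belongs to a fixed exponential family, written in canonical form $p(\x\mid c) = h(\x)\exp(\eta_c^\top T(\x) - A(\eta_c))$, where only the natural parameter $\eta_c$ depends on the class while $h$ and $T$ are shared. Substituting into the log-likelihood ratio immediately gives $(\eta_{c_i}-\eta_{c_j})^\top T(\x) - (A(\eta_{c_i})-A(\eta_{c_j}))$, which is affine in $T(\x)$.

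For the ($\Rightarrow$) direction I would fix one class as a reference, say $c_0$, and use the assumed affinity of all pairwise log-odds to write
\begin{equation}
\log\frac{p(\x\mid c)}{p(\x\mid c_0)} = a_c^\top T(\x) + b_c \qquad \text{for every class } c,
\nonumber
\end{equation}
with $a_{c_0}=0$ and $b_{c_0}=0$. Exponentiating yields $p(\x\mid c) = p(\x\mid c_0)\exp\!\bigl(a_c^\top T(\x)+b_c\bigr)$, so defining the base measure $h(\x):=p(\x\mid c_0)$, the natural parameter $\eta_c:=a_c$, and the log-partition $A(\eta_c):=-b_c$ exhibits every class-conditional density as a member of one fixed exponential family. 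Normalization of each $p(\x\mid c)$ forces $A$ to coincide with the usual log-normalizer $\log\!\int h(\x)\exp(\eta_c^\top T(\x))\,d\x$, so the construction is self-consistent.

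The main obstacle is conceptual rather than computational: clarifying what ``affine'' means. I would make explicit that the same statistic $T$ appearing in the exponential family is the argument in which the log-odds are affine; taking $T(\x)=\x$ recovers the usual linear logistic-regression decision surface, while allowing a nonlinear $T$ extends the statement to generalized linear families. A secondary point to nail down is that, in the ($\Rightarrow$) direction, the base measure $h$ and statistic $T$ must be chosen independently of $c$; this is forced by requiring a single set of affine coefficients $(a_c,b_c)$ — not $c$-dependent $T$ — in the log-odds expression, which is exactly the ``fixed'' in ``any fixed exponential family.''
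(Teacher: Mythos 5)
The paper does not prove this statement at all --- it is imported verbatim from Banerjee's 2007 paper as a cited result and used only to argue that logistic regression's parameter space contains that of any exponential-family generative classifier. So there is no in-paper proof to compare against; I can only assess your argument on its own terms.

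On its own terms your proof is essentially right, and it is the standard argument: under equal priors the posterior log-odds equal the likelihood log-ratio, the forward direction is a one-line cancellation of the shared base measure $h$ and carrier $T$, and the converse is the usual exponential-tilting construction with the reference density $p(\x\mid c_0)$ taken as base measure (the telescoping identity $\log\frac{p(\x\mid c_i)}{p(\x\mid c_j)}=\log\frac{p(\x\mid c_i)}{p(\x\mid c_0)}-\log\frac{p(\x\mid c_j)}{p(\x\mid c_0)}$ guarantees the pairwise coefficients are consistent with a single set of $a_c$'s). Two points deserve more care than you give them. First, the converse implicitly requires the class conditionals to be mutually absolutely continuous (shared support); otherwise $\log\frac{p(\x\mid c)}{p(\x\mid c_0)}$ is not defined everywhere and the tilting construction breaks down. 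Second --- and you half-acknowledge this yourself --- the converse carries content only if the statistic $T$ in which ``affine'' is measured is fixed \emph{before} the equivalence is stated (for Banerjee and for this paper's use of the theorem, $T(\x)=\x$, matching the affine decision surfaces of logistic regression); if $T$ is allowed to be chosen after the fact, any finite collection of mutually absolutely continuous densities can be declared ``affine in $T$'' by taking $T$ to be the vector of log-ratios, and the theorem becomes vacuous. Pinning $T(\x)=\x$ throughout would close that loophole and match the version actually needed in Section 5.2.1.
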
 
That is, the model family of logistic regression incorporates the
model family of any generative exponential (mixture)
model\footnote{Banerjee \cite{Banerjee07logreg} provides a proof also
  for conditional random fields.}. A direct interpretation is that any
generative exponential family mixture model defines a smaller subspace
within the parametric space of the logistic regression model.

Since the model family of logistic regression is larger, it is
asymptotically better, but if the number of data is small, generative
models can be better due to the restricted parameter space; see for
example \cite{Ng02} for a comparison of naive Bayes and logistic
regression. This happens if the particular model family is
at least approximately correct.

\subsubsection{Experiment}
The true model generates ten-dimensional data $\x$ from the
two-component mixture $p(\x)=\sum_{j=1}^2 \frac 12\times p(\x\mid
\tilde \bmu_j,\tilde \bsigma_j), $ where $j$ indexes the mixture
component, and $p(\x\mid \tilde \bmu_j,\tilde \bsigma_j)$ is a
Gaussian with mean $\tilde \bmu_j$ and standard deviation $\tilde
\bsigma_j$.  The data is labeled according to the generating mixture
component (i.e., the ``class'' variable) $j \in \left\{1,2
\right\}$. Two of the ten dimensions contain information about the
class. The ``true'' parameters, used to generate the data, on these
dimensions are $\tilde \mu_1=5,\ \tilde \sigma_1=2$, and $\tilde
\mu_2=9,\ \tilde \sigma_2=2.$ The remaining eight dimensions are
Gaussian noise with $\tilde \mu=9,\ \tilde \sigma=2$ for both
components.

The ``incorrect'' generative model used for inference is a mixture of
two Gaussians where the variances are constrained to be
$\sigma_j=k\cdot \mu_j + 2$. With increasing $k$ the model family thus
draws further away from the true model. We assume for both of the
$\mu_j$ a Gaussian prior $p(\mu_j\mid m,s)$ having the same fixed
hyperparameters\footnote{The model is thus slightly incorrect even
  with $k=0$.} $m=7, s=7$ for all dimensions.

Data sets were generated from the true model with 10000 test data
points and a varying number of training data points
($N_D=\{32,64,128,256,512,1024\}$).  Both the discriminative and
standard posteriors were sampled for the model. The goodness of the
models was evaluated by perplexity of conditional likelihood on the
test data set.

Standard and discriminative posteriors of the incorrect generative
model are compared against Bayesian logistic regression. A uniform
prior for the parameters $\beta$ of the logistic regression model was
assumed.  As mentioned in subsection \ref{sec:logreg vs mm} above, the
Gaussian naive Bayes and logistic regression are connected (see for
example \cite{Mitchell06} for exact mapping between the
parameters). The parameter space of the logistic model incorporates
the true model as well as the incorrect model family, and is therefore
the optimal discriminative model for this toy data. However, since the
parameter space of the logistic regression model is larger, the model
is expected to need more data samples for good predictions.

The models perform as expected (Figure \ref{fig:toycomparison}).  The
model family of our incorrect model was chosen such that it contains
useful prior knowledge about the distribution of $\x$. Compared with
logistic regression, the model family becomes more restricted which is
beneficial for small learning data sets (see
Figure~\ref{fig:toycomparison}a).  Compared to joint density sampling,
discriminative sampling results in better predictions of the class $c$
when the model family is incorrect.

The models were compared more quantitatively by repeating the sampling
ten times for a fixed value of $k=2$ for each of the learning data set
sizes; in every repeat also a new data set was generated. The results
in Figure \ref{fig:toycomparison2} confirm the qualitative findings of
Figure \ref{fig:toycomparison}.

The posterior was sampled with Metropolis-Hastings algorithm with a
Gaussian jump kernel. In case of joint density and discriminative
sampling, three chains were sampled with a burn-in of 500 iterations
each, after which every fifth sample was collected. Bayesian
regression required more samples for convergence, so a burn-in of 5500
samples was used. Convergence was estimated when carrying out
experiments for Figure \ref{fig:toycomparison2}; the length of
sampling chains was set such that the confidence intervals for each of
the models was roughly the same.  The total number of samples was 900
per chain. The width of jump kernel was chosen as a linear function of
data such that the acceptance rate was between 0.2--0.4 \cite{Gelman03}
as the amount of data was increased. Selection was carried out by
preliminary tests with different random seeds.

\paragraph{Missing Data.}
The experiment with toy data is continued in a setting where 50\% of
the learning data are missing at random.

MCMC sampling was carried out as described above.  For the logistic
regression model, a multiple imputation scheme is applied, as
recommended in~\cite{Gelman03}. In order to have sampling conditions
comparable to sampling from a generative model, each sampling chain
used one imputed data set. Imputation was carried out with the
generative model (representing our current best knowledge, differing
from the true joint model, however).

As can be seen from Figure~\ref{fig:toycomparison}, discriminative
sampling is better than joint density sampling when the model is
incorrect. The performance of Bayesian regression seems to be affected
heavily by the incorrect generative model used to generate missing
data. As can be seen from Figure~\ref{fig:toycomparison2},
surprisingly the Bayesian regression is even worse than standard
posterior. The performance could be increased by imputing more than
one data sets with the cost of additional computational complexity,
however.

\begin{figure}[h]
\begin{center}
\begin{tabular}{cccc}
& Joint density MCMC & Discriminative MCMC & Bayesian regression\\ 
\raisebox{2cm}{{\large a)}} & 
\includegraphics[width=0.25\textwidth]{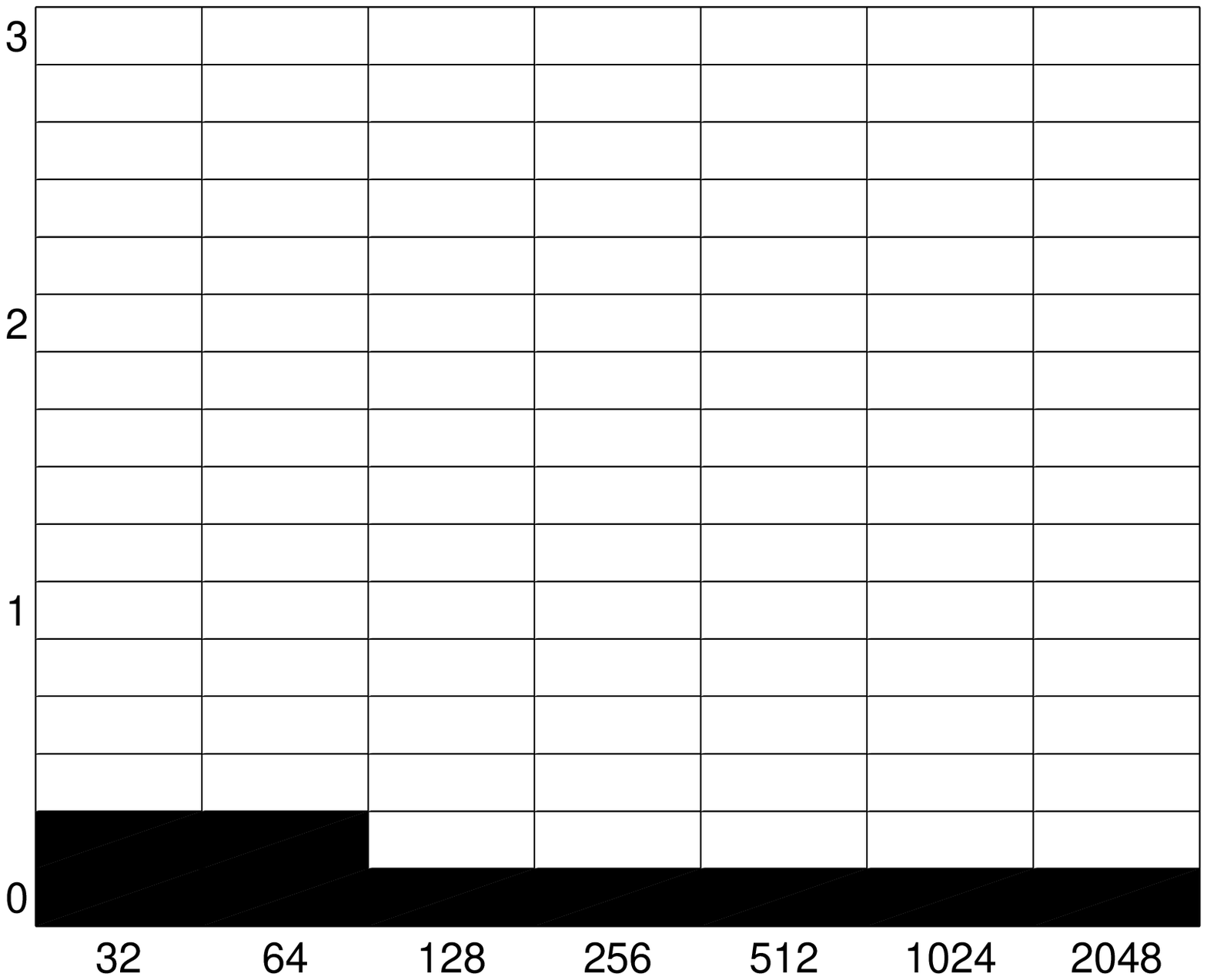}&
\includegraphics[width=0.25\textwidth]{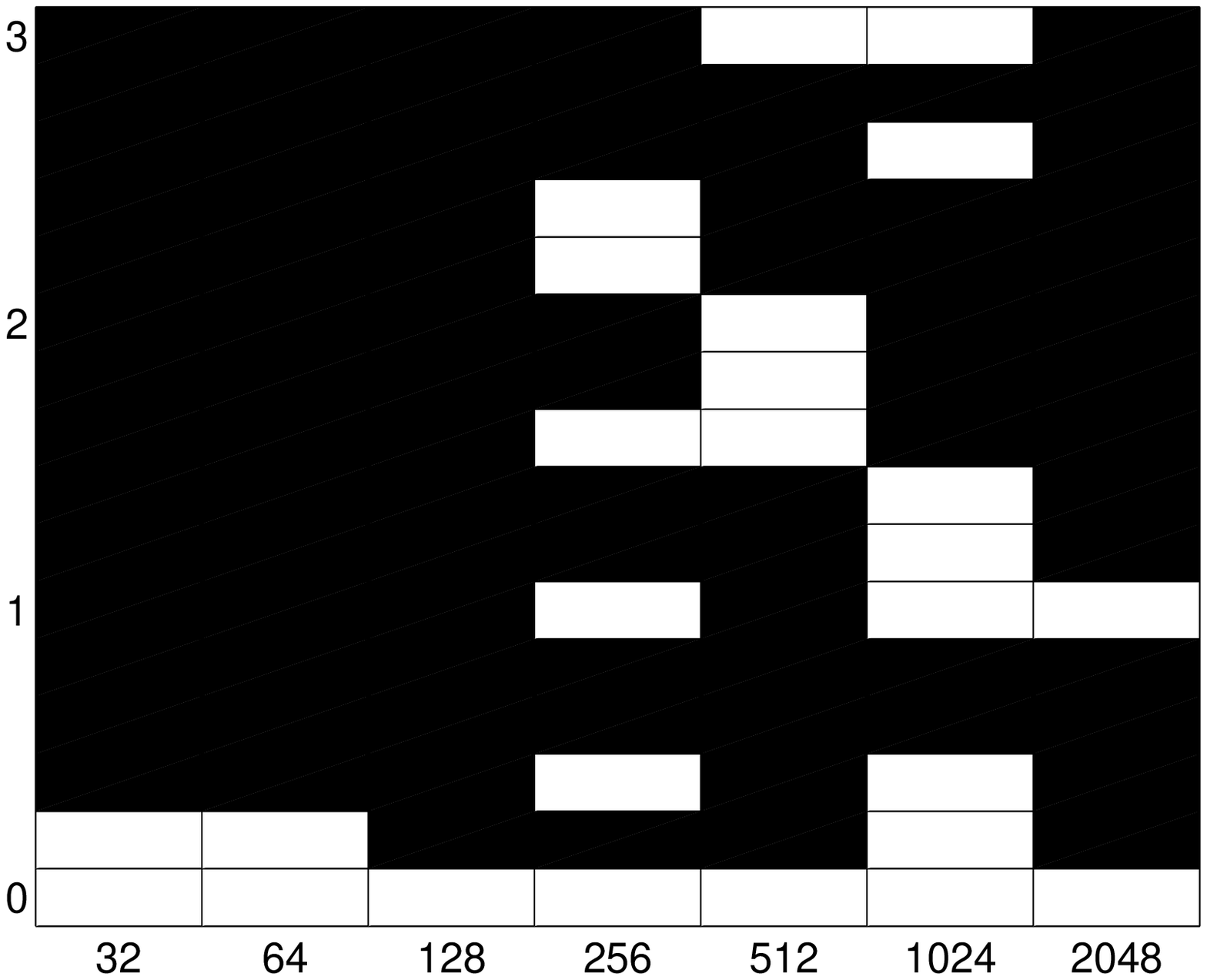}&
\includegraphics[width=0.25\textwidth]{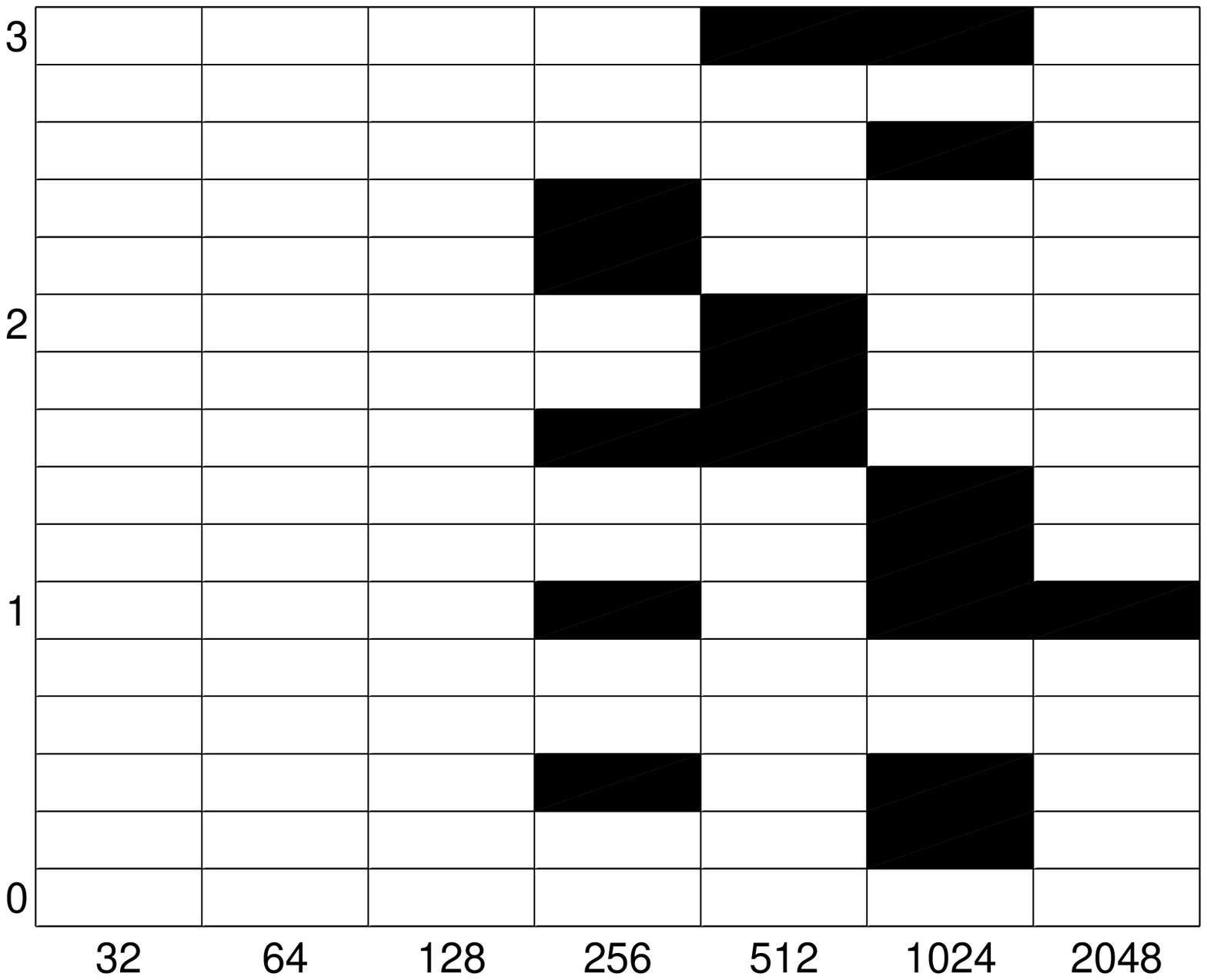}\\
\raisebox{2cm}{{\large b)}} & 
\includegraphics[width=0.25\textwidth]{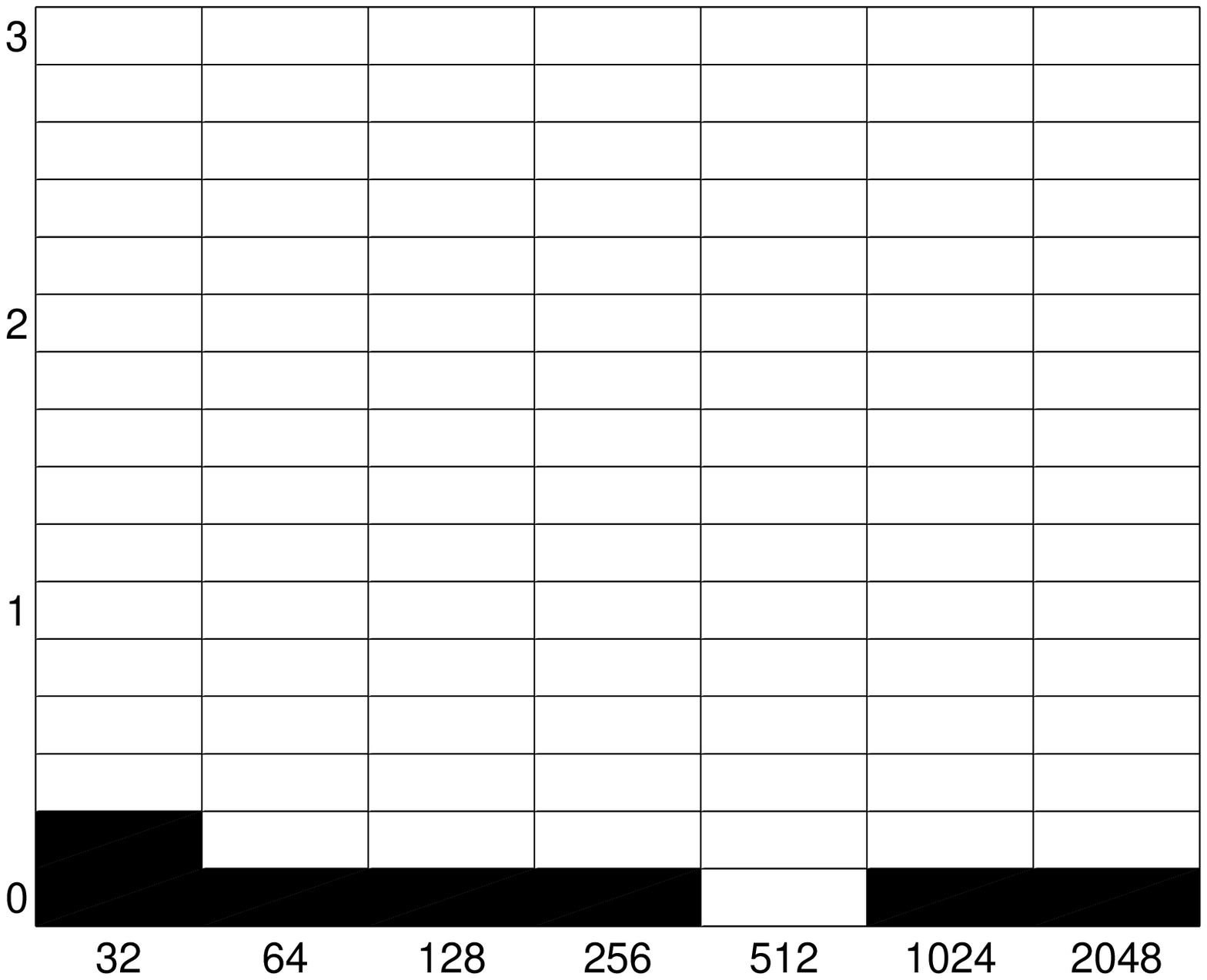}&
\includegraphics[width=0.25\textwidth]{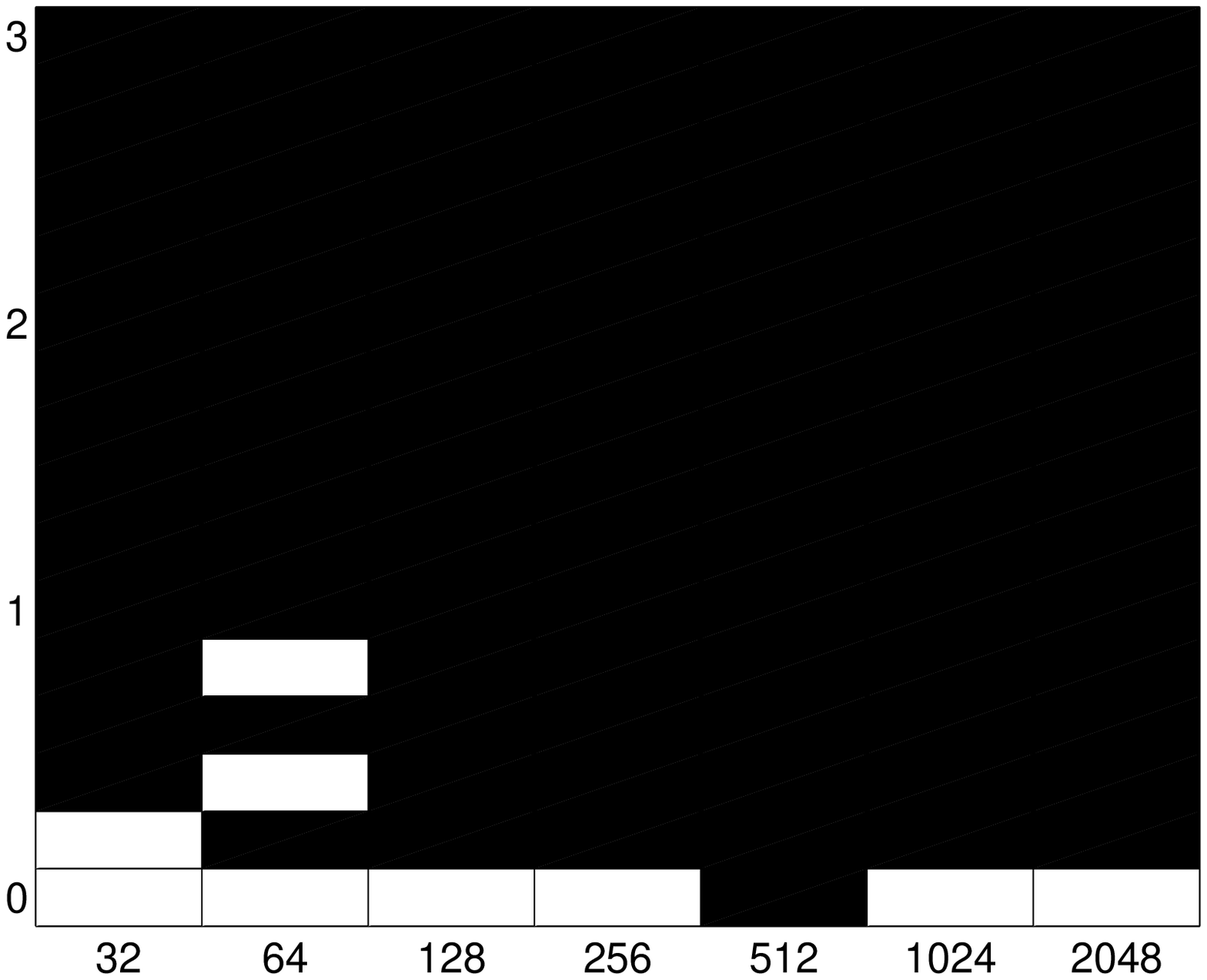}&
\includegraphics[width=0.25\textwidth]{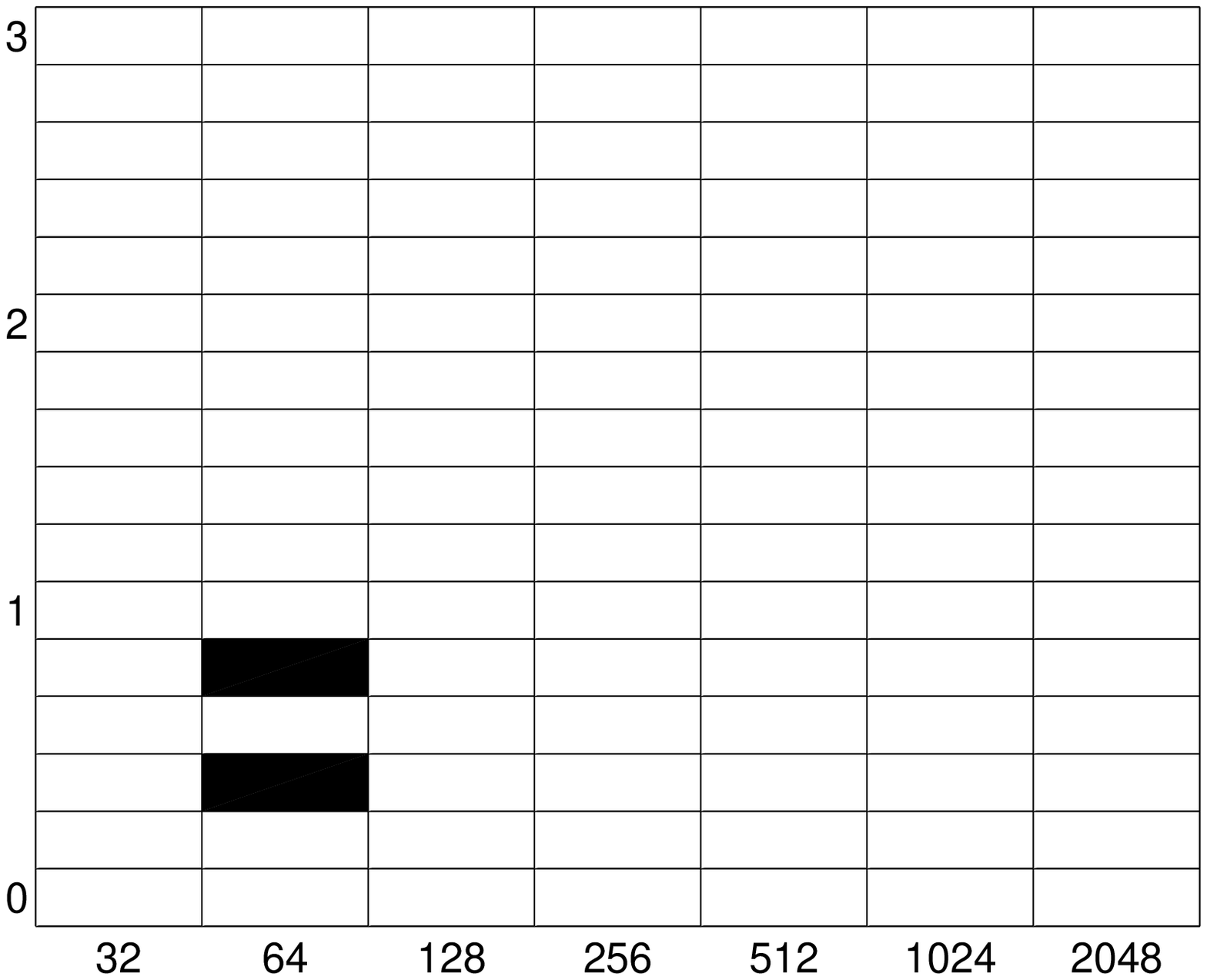}
\end{tabular}
\end{center}
\caption{A comparison of joint density sampling, discriminative
  sampling, and Bayesian regression, a) Full data, b) 50\% of the data
  missing. Grid points where the method is better than others are
  marked with black. The X-axis denotes the amount of data, and Y-axis
  the deviance of the the model family from the ``true'' model (i.e.,
  the value of $k$). The methods are prone to sampling error, but the
  following general conclusions can be made: Bayesian generative
  modeling (``Joint density MCMC'') is best when the model family is
  approximately correct. Discriminative posterior (``Discriminative
  MCMC'') is better when the model is incorrect and the learning data
  set is small. As the amount of data is increased, Bayesian
  regression and discriminative posterior show roughly equal
  performance (see also Figure \ref{fig:toycomparison2}).}
\label{fig:toycomparison}
\end{figure}

\begin{figure}[h]
\begin{center}
\begin{tabular}{cc}
{\large a)} & {\large b)} \\ 
\includegraphics[width=0.45\textwidth]{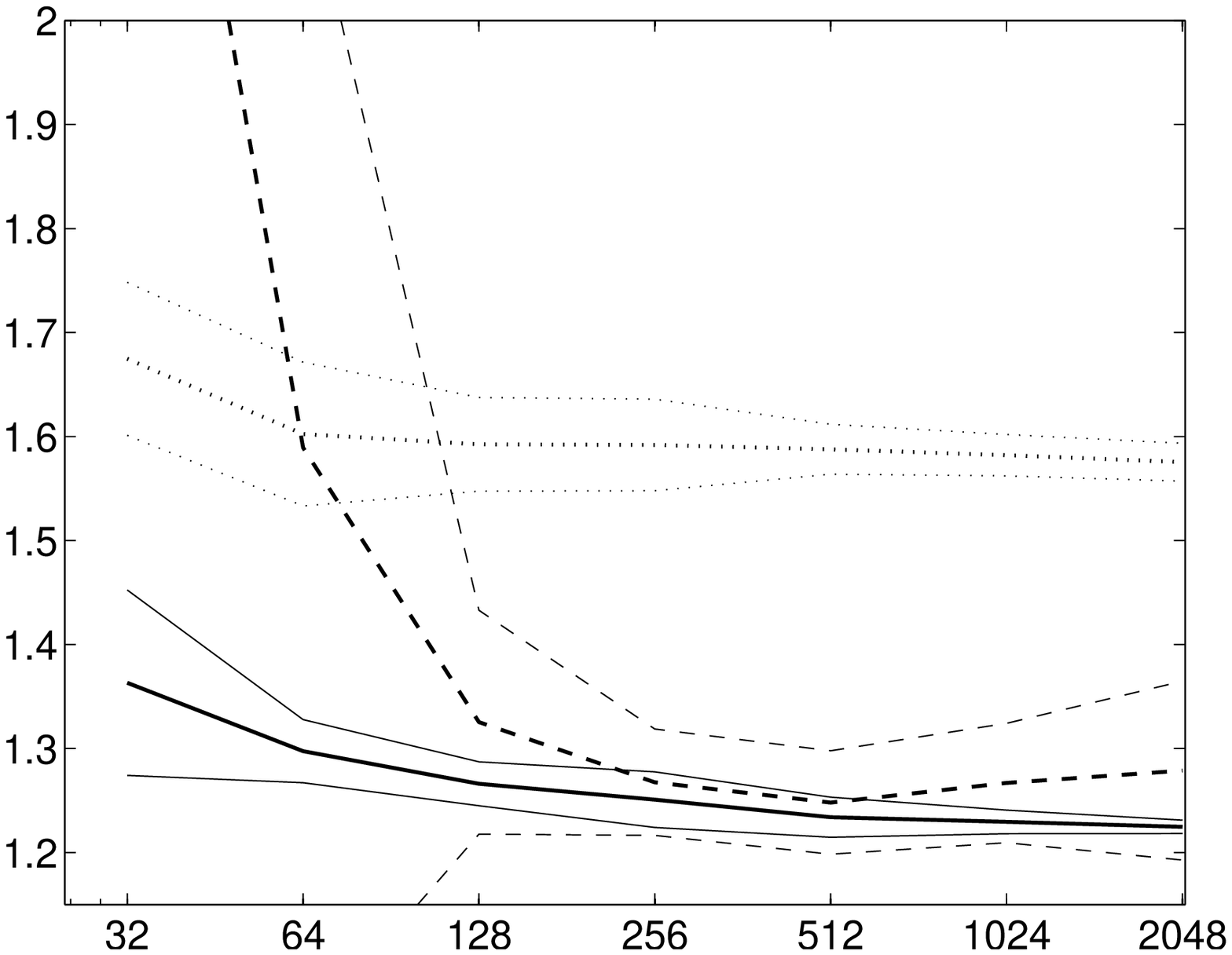}&
\includegraphics[width=0.45\textwidth]{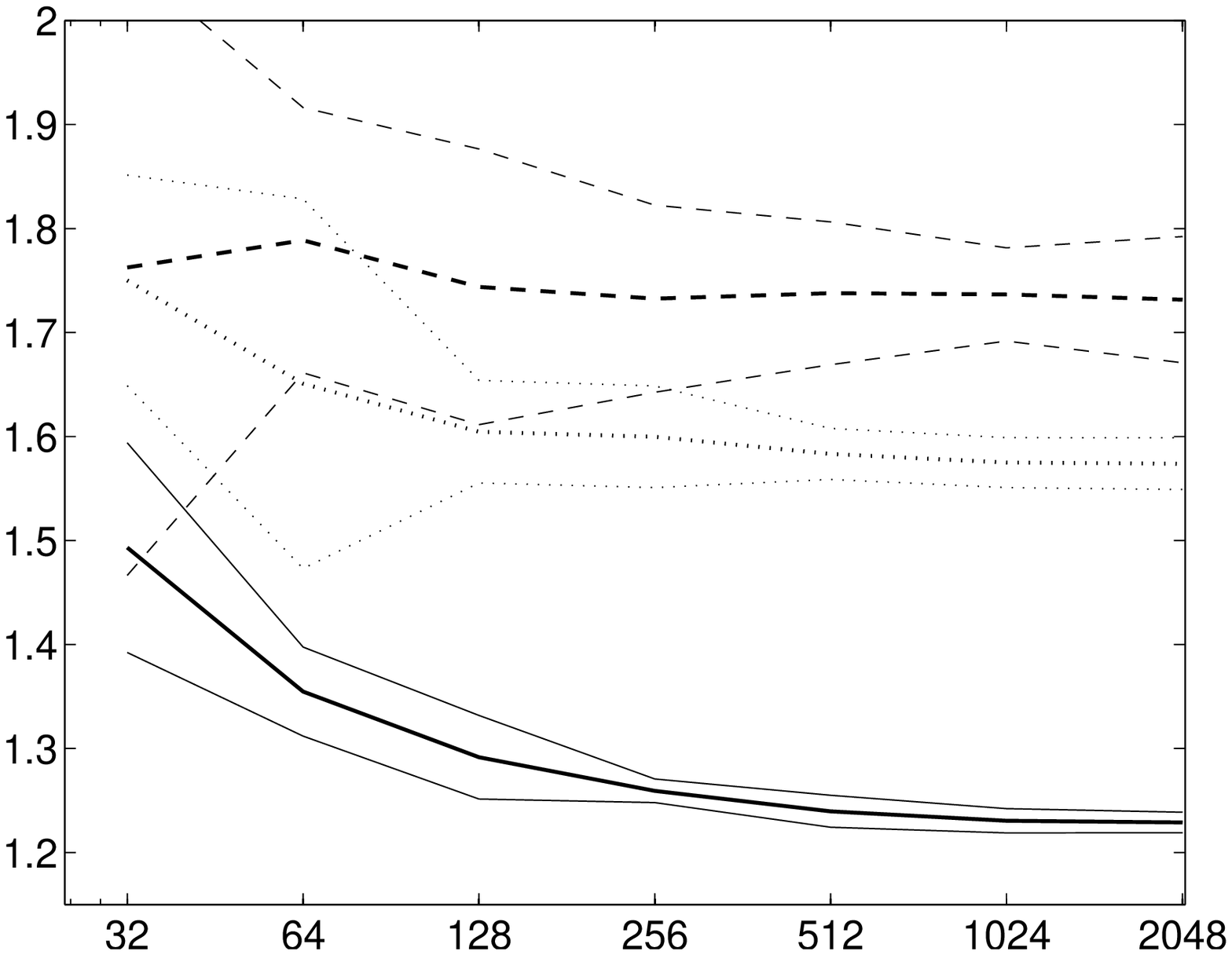}\\
\end{tabular}
\end{center}
\caption{A comparison of joint density sampling (dotted line),
  discriminative sampling (solid line), and Bayesian regression
  (dashed line) with an incorrect model. X-axis: learning data set
  size, Y-axis: perplexity. Also the 95 \% confidence intervals are
  plotted (with the thin lines). Logistic regression performs
  significantly worse than discriminative posterior with small data
  set sizes, whereas with large data sets the performance is roughly
  equal. Joint density modeling is consistently worse. The model was
  fixed to $k=2$, ten individual runs were carried out in order to
  compute 95\% confidence intervals. a) Full data, b) 50\% of the data
  missing.}
\label{fig:toycomparison2}
\end{figure}
 
\subsection{Document Modeling}

As a demonstration in a practical domain we applied the discriminative
posterior to document classification. We used the Reuters data
set~\cite{Lewis04}, of which we selected a subset of 1100 documents
from four categories (CCAT, ECAT, GCAT, MCAT).  Each selected document
was classified to exactly one of the four classes. As a preprocessing
stage, we chose the 25 most informative words within the training set
of 100 documents, having the highest mutual information between
classes and words~\cite{peltonen04icml}. The remaining 1000 documents
were used as a test set.

We first applied a mixture of unigrams model (MUM, see
Figure~\ref{fig:ddpca} left)~\cite{Nigam00}.  The model assumes that
each document is generated from a mixture of $M$ hidden ``topics,'' $
p(\x_i \mid \theta)=\sum_{j=1}^M \pi(j) p(\x_i|\bbeta_j)$, where $j$
is the index of the topic, and $\bbeta_j$ the multinomial parameters
that generate words from the topic. The vector $\x_i$ contains the
observed word counts (with a total of $N_W$) for document $i$, and
$\pi(j)$ is the probability of generating words from the topic
$j$. The usual approach \cite{Gelman03} for modeling paired data
$\left\{\x_i,c_i\right\}_{i=1}^{N_D}$ by a joint density mixture model
was applied; $c$ was associated with the label of the mixture
component from which the data is assumed to be generated. Dirichlet
priors were used for the ${\mathbf \beta}, \pi$, with hyperparameters
set to 25 and 1, respectively. We used the simplest form of MUM
containing one topic vector per class. The sampler used
Metropolis-Hastings with a Gaussian jump kernel. The kernel width was
chosen such that the acceptance rate was roughly 0.2 \cite{Gelman03}.

As an example of a model with continous hidden variables, we
implemented the Latent Dirichlet Allocation (LDA) or discrete PCA
model \cite{Blei02,Buntine06book}.  We constructed a variant of the
model that generates also the classes, shown in Figure~\ref{fig:ddpca}
(right). The topology of the model is a mixture of LDAs; the
generating mixture component $z_c$ is first sampled from $\pi_c$. The
component indexes a row in the matrix $\alpha$, and (for simplicity)
contains a direct mapping to $c$. Now, given $\alpha$, the generative
model for words is an ordinary LDA. The $\pi$ is a topic distribution
drawn individually for each document $d$ from a Dirichlet with
parameters $\alpha\left(z_c\right)$, that is,
Dirichlet$\left(\alpha\left(z_c\right)\right)$. Each word $w_{nd}$
belongs to one topic $z_{nd}$ which is picked from
Multinomial$\left(\pi\right)$. The word is then generated from
Multinomial$\left(\beta\left(z_{nd},\cdot\right)\right)$, where
$z_{nd}$ indexes a row in the matrix ${\mathbf \beta}$. We assume
a Dirichlet prior for the ${\mathbf \beta}$, with hyperparameters set to
2, Dirichlet prior for the ${\mathbf \alpha}$, with hyperparameters
set to 1, and a Dirichlet prior for $\pi_c$ with hyperparameters
equal to 50. The parameter values were set in initial test runs (with
a separate data set from the Reuters corpus). Four topic vectors were
assumed, making the model structure similar to \cite{FeiFei05}. The
difference to MUM is that in LDA-type models a document can be
generated from several topics.

Sampling was carried out using Metropolis-Hastings with a Gaussian jump
kernel, where the kernel width was chosen such that the acceptance
rate was roughly 0.2 \cite{Gelman03}.  The necessary integrals were
computed with Monte Carlo integration. The convergence of integration
was monitored with a jackknife estimate of standard
error \cite{Efron93}; sampling was ended when the estimate was less
than 5 \% of the value of the integral. The length of burn-in was 100
iterations, after which every tenth sample was picked. The total
number of collected samples was 100. The probabilities were clipped to
the range $[e^{-22},1]$.

\begin{figure}[h]
\begin{center}
\includegraphics*[width=0.22\textwidth,angle=270]{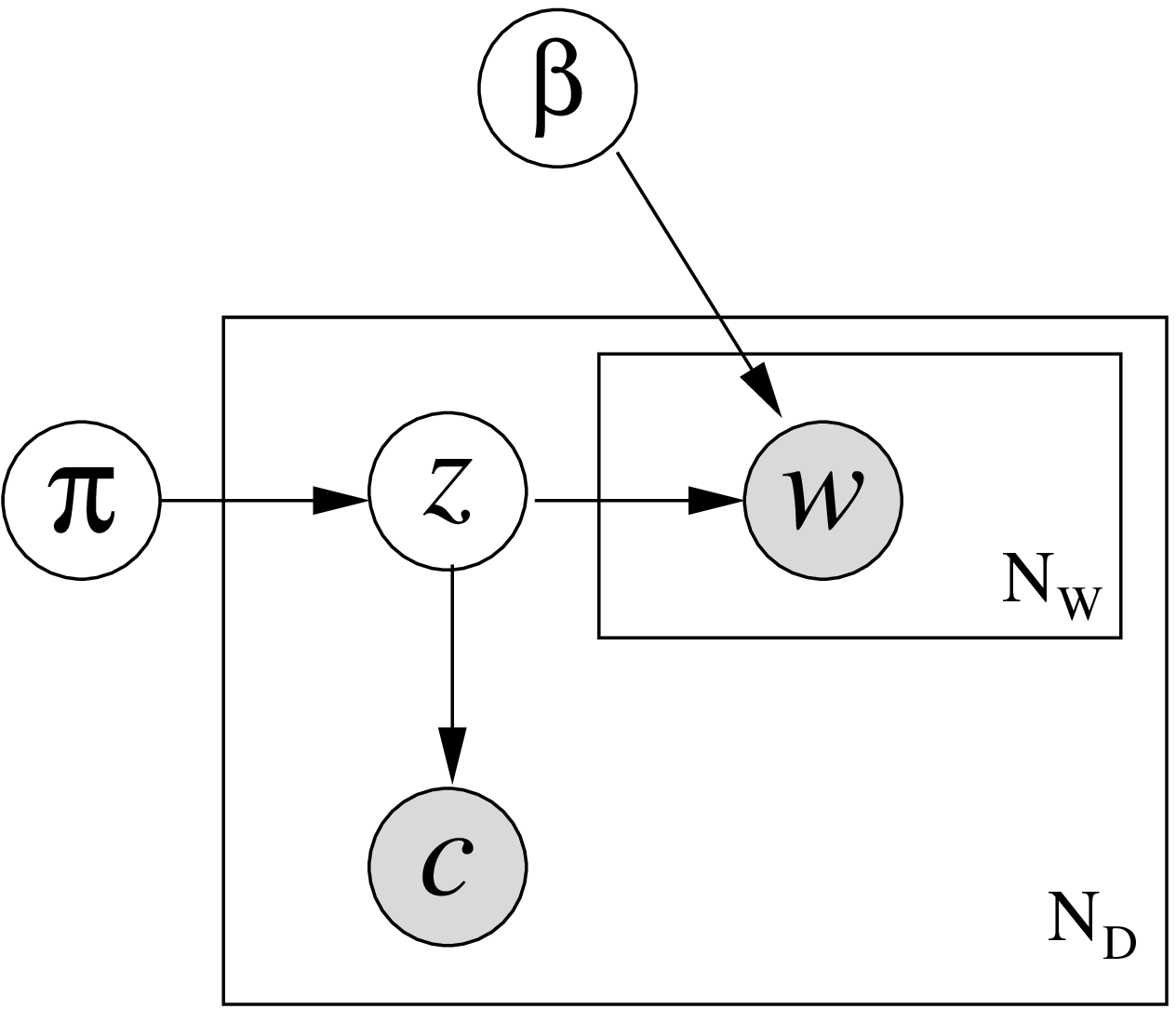} \hspace{1 cm}
\includegraphics*[width=0.22\textwidth,angle=270]{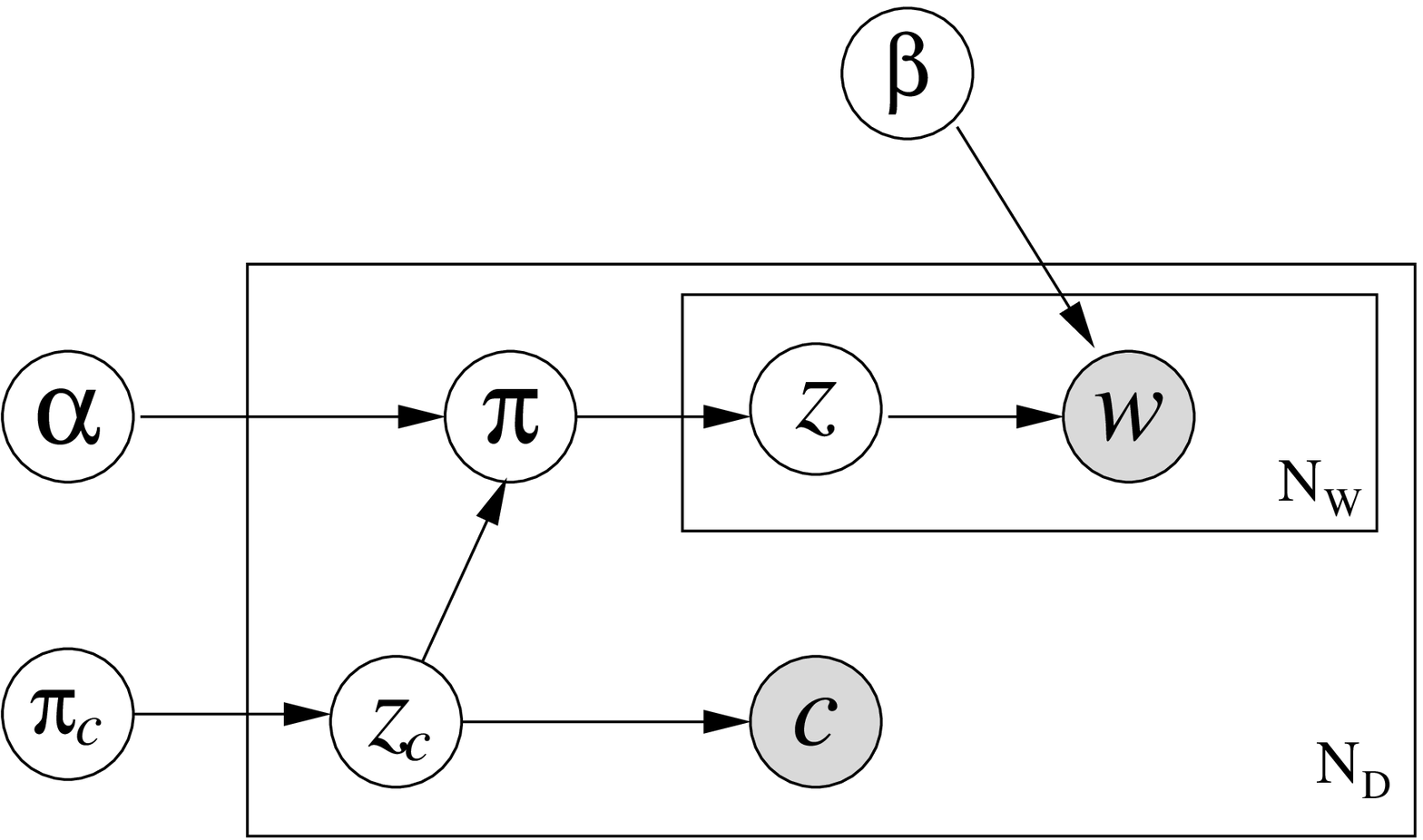} 
\end{center}
\caption{Left: Mixture of Unigrams. Right: Mixture of Latent Dirichlet Allocation models.}
\label{fig:ddpca}
\end{figure}

\subsubsection{Results}

Discriminative sampling is better for both models (Table
\ref{tab:mpcadata}). 
 
\begin{table}[h]
  \caption{Comparison of sampling from the ordinary posterior (jMCMC) and discriminative posterior (dMCMC) for two model families: Mixture of Unigrams (MUM) and mixture of Latent Dirichlet Allocation models (mLDA). The figures are perplexities of the 1000 document test set; there were 100 learning data points. Small perplexity is better; random guessing gives perplexity of 4.}
\label{tab:mpcadata}
\begin{center}
\begin{tabular}{|l|c|c|c|}\hline
Model & dMCMC & jMCMC & Conditional ML\\ \hline
MUM   &  2.56  & 3.98 & 4.84 \\ \hline
mLDA  &  2.36 &  3.92 & 3.14 \\ \hline
\end{tabular}     
\end{center}
\end{table}

\section{Discussion}
We have introduced a principled way of making conditional inference
with a discriminative posterior. Compared to standard joint posterior
density, discriminative posterior results in better inference when the
model family is incorrect, which is usually the case. Compared to
purely discriminative modeling, discriminative posterior is better in
case of small data sets if the model family is at least approximately
correct.  Additionally, we have introduced a justified method for
incorporating missing data into discriminative modeling.

Joint density modeling, discriminative joint density modeling, and
Bayesian regression can be seen as making different assumptions on the
margin model $p(\x\mid\theta)$.  Joint density modeling assumes the
model family to be correct, and hence also the model of $\x$ margin to
be correct. If this assumption holds, the discriminative posterior and
joint density modeling will asymptotically give the same result. On
the other hand, if the assumption does not hold, the discriminative
joint density modeling will asymptotically give better or at least as
good results. Discriminative joint density modeling assumes that the
margin model $p(\x\mid\theta)$ may be incorrect, but the conditional
model $p(c\mid\x,\theta)$, derived from the joint model that includes
the model for the margin, is in itself at least approximately
correct. Then inference is best made with discriminative posterior as
in this paper. Finally, if the model family is completely incorrect
--- or if there is lots of data --- a larger, discriminative model
family and Bayesian regression should be used.

Another approach to the same problem was suggested in \cite{Lasserre06},
where the traditional generative view to discriminative modeling has
been extended by complementing the conditional model for $p(c\mid \x,
\theta)$ with a model for $p(\x\mid \theta')$, to form the joint
density model $p(c,\x\mid \theta,\theta') = p(c\mid \x, \theta)
p(\x\mid \theta')$. That is, a larger model family is postulated with
additional parameters $\theta'$ for modeling the marginal $\x$. The
conditional density model $p(c\mid \x, \theta)$ is derived by Bayes
rule from a formula for the joint density, $p(c,\x\mid \theta)$, and
the model for the marginal $p(\x\mid \theta')$ is obtained by
marginalizing it.

This conceptualization is very useful for semisupervised learning; the
dependency between $\theta$ and $\theta'$ can be tuned by choosing a
suitable prior, which allows balancing between discriminative and
generative modeling. The optimum for semisupervised learning is found
in between the two extremes. The approach of \cite{Lasserre06}
contains our discriminative posterior distribution as a special case
in the limit where the priors are independent, that is,
$p(\theta,\theta')=p(\theta)p(\theta')$ where the parameters $\theta$
and $\theta'$ can be treated independently.

Also \cite{Lasserre06} can be viewed as giving a theoretical
justification for Bayesian discriminative learning, based on
generative modeling. The work introduces a method of extending a fully
discriminative model into a generative model, making discriminative
learning a special case of optimizing the likelihood (that is, the
case where priors separate). Our work starts from different
assumptions.  We assume that the utility functions can be different,
depending on the goal of the modeler.  As we show in this paper, the
requirement of our axiom 5 that the utility should agree with the
posterior will eventually lead us to the proper form of the
posterior. Here we chose conditional inference as the utility,
obtaining a discriminative posterior. If the utility had been joint
modeling of $(c,\x)$, we would have obtained the standard posterior (the
case of no covariates). If the given model is incorrect the different
utilities lead to different learning schemes. From this point of view,
the approach of \cite{Lasserre06} is principled only if
the ``true'' model belongs to the postulated larger model family
$p(c,\x\mid \theta,\theta')$.

As a practical matter, efficient implementation of sampling from the
discriminative posterior will require more work. The sampling schemes
applied in this paper are simple and computationally intensive; there
exist several more advanced methods which should be tested.

\subsection*{Acknowledgments}
This work was supported in part by the PASCAL2 Network of Excellence of
the European Community. This publication only reflects the authors' views.



      
%


\appendix
\section{Proofs}

We use the notation $\r=(c,\x)$ and denote the set of all 
possible observations $\r$
by $R$.

\begin{footnotesize}
For purposes of some parts of the proof, we assume that the set of
possible observations $R$ is finite. This assumption is not
excessively restrictive, since any well-behaving infinite set of
observations and respective probabilistic models can be approximated
with an arbitrary accuracy by discretizing $R$ to sufficiently many
bins and converting the probabilistic models to the corresponding
multinomial distributions over $R$.

{\bf Example:} Assume the observations $r$ are real numbers in compact
interval $[a,b]$ and they are modelled by a well-behaving probability
density $p(r)$, that is, the set of possible observations $R$ is an
infinite set. We can approximate the distribution $p(x)$ by
partitioning the interval $[a,b]$ into $N$ bins
$I(i)=[a+(i-1)(b-a)/N,a+i(b-a)/N]$ of width $(b-a)/N$ each, where
$i\in\{1,\ldots,N\}$, and assigning each bin a multinomial probability
$\theta_{i}=\int_{I(i)}{p(r)dr}$. One possible choice for the family
of models $p(r)$ would be the Gaussian distributions parametrized by
the mean and variance; the parameter space of these Gaussian
distributions would span a 2-dimensional subspace $\overline\Theta$ in
the $N-1$ dimensional parameter space $\Theta$ of the multinomial
distributions.
\end{footnotesize}

\subsection{From exchangeability $F(h(c',{\bf x}'),p_d(\theta\mid
  (c,{\bf x}))) = F(h(c,{\bf x}),p_d(\theta\mid
  (c',{\bf x}')))$ it follows that the posterior is homomorphic to multiplicativity}



\begin{enumerate}
\item[4.] Exchangeability: The value of the posterior is independent of
  the ordering of the observations. That is, the posterior after two
  observations $(c,{\bf x})$ and $(c',{\bf x}')$ is the same
  irrespective of their ordering: 
  $F(h((c',{\bf x}'),\theta),p_d(\theta\mid
  (c,{\bf x})\cup D)) = F(h((c,{\bf x}),\theta),p_d(\theta\mid
  (c',{\bf x}')\cup D))$.
\end{enumerate}

\begin{proof}
  For simplicity let us denote $x=h(c',{\bf x}')$, $y=h(c,{\bf x})$, and
  $z=p(\theta)$. The exchangeability axiom thus reduces to the problem
  of finding a function $F$ such that
\begin{equation}
  F(x,F(y,z))=F(y,F(x,z))~~~,
\end{equation}
where $F(y,z)=p_d(\theta\mid (c,{\bf x}))$. By denoting $F(x,z)=u$ and
$F(y,z)=v$, the equation becomes $F(x,v)=F(y,u)$.

We begin by assuming that function $F$ is differentiable in both its
arguments (in a similar manner to Cox). Differentiating with respect
to $z$, $y$ and $x$ in turn, and writing $F_1(p,q)$ for
$\fracpartial{F(p,q)}{p}$ and $F_2(p,q)$ for
$\fracpartial{F(p,q)}{q}$, we obtain
\begin{eqnarray}
F_2(x,v)\fracpartial{v}{z}&=&F_2(y,u)\fracpartial{u}{z}\label{eq:diff z} \\
F_2(x,v)\fracpartial{v}{y}&=&F_1(y,u) \label{eq:diff y}\\
F_1(x,v)&=&F_2(y,u)\fracpartial{u}{x}~~~. \label{eq:diff x}
\end{eqnarray}
Differentiating equation (\ref{eq:diff z}) wrt. $z$, $x$ and $y$ in turn, we get
\begin{eqnarray}
  F_{22}(x,v)\left(\fracpartial{v}{z}\right)^2+F_2(x,v)\frac{\partial^2 v}{\partial z^2}&=&F_{22}(y,u)\left(\fracpartial{u}{z}\right)^2+F_2(y,u)\frac{\partial^2 u}{\partial z^2} \label{eq:diff zz}\\
  F_{12}(x,v)\fracpartial{v}{z}&=&F_{22}(y,u)\fracpartial{u}{x}\fracpartial{u}{z}+F_2(y,u)\frac{\partial^2 u}{\partial x \partial z} \label{eq:diff xz}\\
  F_{22}(x,v)\fracpartial{v}{y}\fracpartial{v}{z}+F_2(x,v)\frac{\partial^2 v}{\partial z \partial y}&=&F_{12}(y,u)\fracpartial{u}{z} \label{eq:diff yz}~~~,
\end{eqnarray}
and differentiating equation (\ref{eq:diff y}) wrt. $x$ we get
\begin{equation}
F_{12}(x,v)\fracpartial{v}{y}=F_{12}(y,u)\fracpartial{u}{x} \label{eq:diff yx}~~~.
\end{equation}
By solving $F_{12}(x,v)$ from equation (\ref{eq:diff xz}) and
$F_{12}(y,u)$ from equation (\ref{eq:diff yz}) and inserting into
equation (\ref{eq:diff yx}), we get
\begin{eqnarray}
 \frac{F_{22}(y,u)\fracpartial{u}{x}\fracpartial{u}{z}+F_2(y,u)\frac{\partial^2u}{\partial x \partial z}}{\fracpartial{v}{z}}\fracpartial{v}{y}&=&\frac{F_{22}(x,v)\fracpartial{v}{y}\fracpartial{v}{z}+F_2(x,v)\frac{\partial^2v}{\partial z \partial y}}{\fracpartial{u}{z}}\fracpartial{u}{x}\nonumber \\
 &\Leftrightarrow&\nonumber \\
 F_{22}(y,u)\left(\fracpartial{u}{z}\right)^2-F_{22}(x,v)\left(\fracpartial{v}{z}\right)^2&=&\frac{F_2(x,v)\frac{\partial^2v}{\partial z \partial y}\fracpartial{u}{x}\fracpartial{v}{z}-F_2(y,u)\frac{\partial^2u}{\partial x \partial z}\fracpartial{u}{z}\fracpartial{v}{y}}{\fracpartial{u}{x}\fracpartial{v}{y}}~~~.\nonumber \\ \label{eq:intermediate1}
\end{eqnarray}
By inserting equation (\ref{eq:intermediate1}) into equation
(\ref{eq:diff zz}), we get
\begin{eqnarray}
F_2(x,v)\frac{\partial^2 v}{\partial z^2}-F_2(y,u)\frac{\partial^2 u}{\partial z^2}&=&\frac{F_2(x,v)\frac{\partial^2v}{\partial z \partial y}\fracpartial{u}{x}\fracpartial{v}{z}-F_2(y,u)\frac{\partial^2u}{\partial x \partial z}\fracpartial{u}{z}\fracpartial{v}{y}}{\fracpartial{u}{x}\fracpartial{v}{y}} \nonumber \\
\Leftrightarrow 
\frac{F_2(x,v)}{F_2(y,u)}\left(\frac{\partial^2 v}{\partial z^2}\fracpartial{u}{x}\fracpartial{v}{y}-\frac{\partial^2v}{\partial z \partial y}\fracpartial{u}{x}\fracpartial{v}{z} \right)&=&\frac{\partial^2 u}{\partial z^2}\fracpartial{u}{x}\fracpartial{v}{y}-\frac{\partial^2u}{\partial z \partial x}\fracpartial{u}{z}\fracpartial{v}{y}~~~. \label{eq:intermediate2}
\end{eqnarray}
Notice that by equation (\ref{eq:diff z}) we can write
$\frac{F_2(x,v)}{F_2(y,u)}=\fracpartial{u}{z}/\fracpartial{v}{z}$. Inserting
this into equation (\ref{eq:intermediate2}), and dividing by
$\fracpartial{v}{y}\fracpartial{u}{x}\fracpartial{u}{z}$, the equation
simplifies to
\begin{equation}
\frac{\frac{\partial^2 v}{\partial z^2}}{\fracpartial{v}{z}}-\frac{\frac{\partial^2v}{\partial z \partial y}}{\fracpartial{v}{y}}=\frac{\frac{\partial^2 u}{\partial z^2}}{\fracpartial{u}{z}}-\frac{\frac{\partial^2u}{\partial z \partial x}}{\fracpartial{u}{x}}~~~. \nonumber
\end{equation}
The equation can be written also as
\begin{eqnarray}
\fracpartial{}{z} \ln{\fracpartial{v}{z}} - \fracpartial{}{z} \ln \fracpartial{v}{y}&=&\fracpartial{}{z} \ln \fracpartial{u}{z} - \fracpartial{}{z} \ln \fracpartial{u}{x} \nonumber \\
&\Leftrightarrow&\nonumber \\ 
\fracpartial{}{z} \ln \left(\frac{\fracpartial{v}{z}}{\fracpartial{v}{y}}\right)
&=&
\fracpartial{}{z} \ln \left(\frac{\fracpartial{u}{z}}{\fracpartial{u}{x}}\right)~~~.
\end{eqnarray}
Since the left hand side depends on $y,z$ and right hand side depends
on $x,z$, it follows that both must be functions of only
$z$. Furthermore, since the derivative of the logarithm of a function
is a function of $z$, the function itself must be of the form
\begin{equation}
\frac{\fracpartial{v}{z}}{\fracpartial{v}{y}}=\frac{\Phi_1(y)}{\Phi_2(z)}~~~.\label{eq:intermediate3}
\end{equation}
On the other hand, dividing equation (\ref{eq:diff y}) by equation
(\ref{eq:diff x}), we get
\begin{equation}
\frac{F_2(x,v)}{F_1(x,v)}\fracpartial{v}{y}=\left(\frac{F_2(y,u)}{F_1(y,u)}\fracpartial{u}{x}\right)^{-1}~~.
\end{equation}
By equation (\ref{eq:intermediate3}),
$\frac{F_2(x,v)}{F_1(x,v)}=\frac{\Phi_1(x)}{\Phi_2(v)}$. Inserting
this, we get
\begin{eqnarray}
\frac{\Phi_1(x)}{\Phi_2(v)}\fracpartial{v}{y}&=&\left(\frac{\Phi_1(y)}{\Phi_2(u)}\fracpartial{u}{x}\right)^{-1}\nonumber \\
\Leftrightarrow\nonumber \\
\frac{\Phi_1(y)}{\Phi_2(v)}\fracpartial{v}{y}&=&\left(\frac{\Phi_1(x)}{\Phi_2(u)}\fracpartial{u}{x}\right)^{-1}~~~.
\end{eqnarray}
Now, since left hand side depends only on $(y,z)$ and right hand side
on $(x,z)$, each must be a function of $z$ only, that is
$g(z)$. Furthermore, we note that we must have
\begin{equation}
g(z)=\frac{1}{g(z)} \Leftrightarrow g(z)=\pm 1~~.
\end{equation}
Since the condition must be fulfilled for all $x,y$, we must have
\begin{eqnarray}
\frac{\Phi_1(y)}{\Phi_2(v)}\fracpartial{v}{y}=1\nonumber \\
\left(\frac{\Phi_1(y)}{\Phi_2(v)}\fracpartial{v}{y}\right)^{-1}=1\nonumber
\end{eqnarray}
for each $y$ as well. Summing these, we get
\begin{eqnarray}
\frac{\Phi_1(y)}{\Phi_2(v)}\fracpartial{v}{y}+\frac{\Phi_2(v)}{\Phi_1(y)}\frac{1}{\fracpartial{v}{y}}&=&2\nonumber \\
\Leftrightarrow \left(\fracpartial{v}{y} - \frac{\Phi_2(v)}{\Phi_1(y)} \right)^2&=&0~~~. \nonumber
\end{eqnarray}
We can then write
\begin{eqnarray}
\fracpartial{v}{y}&=&\frac{\Phi_2(v)}{\Phi_1(y)}\nonumber \\
\fracpartial{v}{z}&=&\frac{\Phi_2(v)}{\Phi_2(z)}\nonumber~~~.
\end{eqnarray}
Combining these to obtain the differential $d v$ we get
\begin{equation}
\frac{dv}{\Phi_2(v)}=\frac{dy}{\Phi_1(y)}+\frac{dz}{\Phi_2(z)}~~~.
\end{equation}
By denoting $\int \frac{dp}{\Phi_k(p)} = \ln f_k(p)$, we obtain
\begin{equation}
Cf_2(v)=f_1(y)f_2(z)~~~.
\end{equation}
The function $f_1$ can be incorporated into our model, that is $f_1
\circ h \mapsto h$. By inserting $v=F(y,z)$ we get the final form
\begin{equation}
Cf_2(F(y,z))=h(y)f_2(z)~~~.
\end{equation}
\hfill $\Box$
\end{proof}

\subsection{Mapping from $p(c\mid {\bf x} , \theta)$ to $h(\r,\theta)$ is Monotonically Increasing}

\begin{proposition}
From axiom 5 it follows that 
\begin{equation}\label{eq:h_finalA}
\log{h(\r,\theta)} = f_C(\log{p(c|{\bf x} , \theta)})
\end{equation}~where $f_C$ is a monotonically increasing function.
\end{proposition}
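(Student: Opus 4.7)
The strategy is to use the $n\to\infty$ limit of the data set $D_{\tilde\theta}$ to convert axiom 5 from a constraint on posterior values into a pointwise constraint on $\log h$. Combining the preceding multiplicativity proposition with the strong law of large numbers applied to the i.i.d.\ sample $D_{\tilde\theta}\sim p(\r\mid\tilde\theta)$, the per-sample log-posterior satisfies
\begin{equation*}
\tfrac{1}{n}\log p_d(\theta\mid D_{\tilde\theta}) \;\longrightarrow\; E_{p(\r\mid\tilde\theta)}\bigl[\log h(\r,\theta)\bigr] + C(\tilde\theta),
\end{equation*}
with $C(\tilde\theta)$ independent of $\theta$ (it absorbs both the asymptotically vanishing prior contribution and the normalizer $\tfrac{1}{n}\log Z_{D_{\tilde\theta}}$, which tends to a $\theta$-free constant via Laplace's method). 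Since $\theta\mapsto p_d$ and $\theta\mapsto\tfrac{1}{n}\log p_d$ induce the same ordering on $\overline\Theta$, the posterior ordering demanded by axiom 5 on a very large $D_{\tilde\theta}$ is inherited by this limit. Rewriting (\ref{eq:cond}) as $-K_{COND}(\tilde\theta,\theta)=E_{p(\r\mid\tilde\theta)}[\log p(c\mid\x,\theta)]+\mathrm{const}(\tilde\theta)$, axiom 5 thereby becomes: for every $\tilde\theta\in\Theta$ and $\theta_1,\theta_2\in\overline\Theta$,
\begin{equation*}
E_{p(\r\mid\tilde\theta)}[\log h(\r,\theta_1)] \le E_{p(\r\mid\tilde\theta)}[\log h(\r,\theta_2)] \;\Longleftrightarrow\; E_{p(\r\mid\tilde\theta)}[\log p(c\mid\x,\theta_1)] \le E_{p(\r\mid\tilde\theta)}[\log p(c\mid\x,\theta_2)],
\end{equation*}
with equality on one side matching equality on the other.

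Next, I would specialise $\tilde\theta$ to a point mass at an arbitrary observation $\r=(c,\x)$. The discretization footnote places all multinomial distributions on the finite $R$ inside $\Theta$, so point masses are admissible. Substituting $\tilde\theta=\delta_{\r}$ collapses both expectations to point evaluations, giving the strict pointwise equivalence $\log h(\r,\theta_1)\le\log h(\r,\theta_2) \Leftrightarrow \log p(c\mid\x,\theta_1)\le\log p(c\mid\x,\theta_2)$. For each fixed $\r$, this exhibits $\theta\mapsto\log h(\r,\theta)$ as a strictly monotone increasing function of $\theta\mapsto\log p(c\mid\x,\theta)$; call this function $f_C$. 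To check that one common $f_C$ serves every $\r$, I would reapply the argument to two-point mixtures $\tilde\theta=\alpha\delta_{\r_1}+(1-\alpha)\delta_{\r_2}$: requiring the ordering of $\alpha f_{\r_1}(u_1)+(1-\alpha)f_{\r_2}(u_2)$ to match that of $\alpha u_1+(1-\alpha)u_2$ as $(\theta,\alpha)$ varies forces the per-$\r$ monotone functions to coincide (up to inessential constants that cancel under normalisation of $p_d$, and that will be pinned down anyway by the later step $A>0$, $A=1$).

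The main obstacle is rigorously justifying the asymptotic passage: axiom 5 quantifies over a very large but finite $D_{\tilde\theta}$, whereas the LLN argument lives in the limit. The passage is legitimate because, under the discretization footnote, $R$ is finite, so $\log h(\r,\theta)$ has bounded variance under any $\tilde\theta$, and $\tfrac{1}{n}\log p_d(\theta\mid D_{\tilde\theta})$ converges uniformly on compact subsets of $\overline\Theta$. For $n$ large enough, the finite-$n$ ordering therefore agrees with the limit ordering except on a vanishing boundary where both sides of the equivalence collapse simultaneously, and axiom 5 is precisely what rules out the degenerate case. A minor secondary concern is the set of $\theta$ at which $p(c\mid\x,\theta)=0$ (where $\log p$ is undefined); by axiom 2 one may restrict to the relative interior of the parameter space and extend $f_C$ by continuity.
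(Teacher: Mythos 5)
Your proof is correct and follows essentially the same route as the paper's: axiom 5 is reduced, via the law of large numbers applied to the product form of the posterior, to an equivalence between $E_{p(\r\mid\tilde\theta)}[\log h(\r,\theta)]$ and $E_{p(\r\mid\tilde\theta)}[\log p(c\mid\x,\theta)]$, and specializing $\tilde\theta$ to point masses (the paper's ``corner points'' of the multinomial simplex) collapses this to the pointwise monotone relation defining $f_C$. You merely make explicit two steps the paper leaves implicit --- the asymptotic passage from a large finite $D_{\tilde\theta}$ to expectations, and the argument (via two-point mixtures) that a single $f_C$ serves all $\r$, where the paper only invokes ``symmetry with respect to re-labeling'' --- but the skeleton is identical.
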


\begin{proof} 
Denoting
$\tilde\theta_r = p(\r|\tilde\theta)$ in inequalities (5) and  (6) in the paper
we can write them in the following form
\begin{equation}\label{eq:ax4hA}
\sum_{\r \in R} {\tilde\theta_r \;\log{h(\r,\theta_1)}}\le
\sum_{\r \in R} {\tilde\theta_r \;\log{h(\r,\theta_2)}}
\end{equation}
\centerline{$\Updownarrow$}
\begin{equation}\label{eq:ax4pA}
\sum_{\r \in R} {\tilde\theta_r \; \log{p(c|{\bf x}, \theta_1)}}\le
\sum_{\r \in R} {\tilde\theta_r \;\log{p(c|{\bf x}, \theta_2)}}~~~.
\end{equation}
Consider the points in the parameter space
$\Theta$, where $\tilde\theta_k=1$ and $\tilde\theta_i=0$ for $i \ne k$
(``corner points''). 
In these points the linear combinations vanish and the equivalent inequalities 
 (\ref{eq:ax4hA}) and  (\ref{eq:ax4pA})
become  
\begin{equation}\label{eq:ax4hpnew}
\left \{
\begin{array}{c}
\\
 \log{h(\r_k,\theta_1)}\le
   \log{h(\r_k,\theta_2)}\\
 \\
\Updownarrow \\
 \\
  \log{p(c_k|\x_k, \theta_1)}\le
 \log{p(c_k|\x_k, \theta_2)}\\
\\
\end{array}
\right . ~~~.
\end{equation}

Since the functional form of $f_C$
must be the same regardless of the choice of $\tilde\theta$, 
equivalence (\ref{eq:ax4hpnew}) holds everywhere 
in the parameter space, not just in the corners.

From the equivalence (\ref{eq:ax4hpnew}) (and the symmetry
of the models with respect to re-labeling the data items)
it follows that $h(\r, \theta)$ must be of the form
\begin{equation}\nonumber
\log{h(\r,\theta)} = f_C(\log{p(c|\x, \theta)})\end{equation}
where $f_C$ is a monotonically increasing function. $\Box$
\end{proof} 

\subsection{Mapping is of Form $h(\r,\theta)= \exp(\beta) \; p(c\mid \x, \theta)^A$}

\begin{proposition}
For a continuous increasing function $f_C(t)$ for which
\begin{equation}\nonumber
\log{h(\r,\theta)}=f_C\left( \log p(c|\x,\theta)\right)
\end{equation}
it follows from axiom 5  that
\begin{equation}\nonumber
f_C(t)=At + \beta~~~,
\end{equation}
or, equivalently,
\begin{equation}\label{eq:hlinA}
h(\r,\theta)= \exp(\beta) \; p(c|\x,\theta)^A \quad \text{with} \quad A>0.
\end{equation}
\end{proposition}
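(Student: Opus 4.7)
The plan is to extract a Cauchy-type functional equation for $f_C$ from axiom~5 and then use continuity and monotonicity to conclude that $f_C$ must be affine.

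First I would specialize the equivalences~(\ref{eq:ax4hA})--(\ref{eq:ax4pA}) provided by axiom~5, together with their ``equality only if equality'' clause, by taking the true distribution $\tilde\theta$ to be concentrated on just two observations $\r_i = (c_i,\x_i)$, $i=1,2$, with distinct $\x_i$, each carrying mass $1/2$. After substituting $\log h(\r,\theta) = f_C(\log p(c\mid\x,\theta))$ from~(\ref{eq:h_finalA}) and writing $a_i = \log p(c_i\mid\x_i,\theta_1)$, $b_i = \log p(c_i\mid\x_i,\theta_2)$, the equality clause reads
\begin{equation}
a_1 + a_2 = b_1 + b_2 \quad \Longleftrightarrow \quad f_C(a_1)+f_C(a_2) = f_C(b_1)+f_C(b_2). \nonumber
\end{equation}
Because the two observations involve distinct $\x$-values, the four quantities $a_1,a_2,b_1,b_2$ can be prescribed independently by choosing $\theta_1,\theta_2\in\overline\Theta$, using the multinomial flexibility of the discretized parameter space set up at the start of this appendix; the only binding constraint, $\sum_c p(c\mid\x,\theta)=1$, applies per-$\x$ and involves only one chosen $c$ per $\x$, so each $a_i,b_i$ may range freely over $(-\infty,0]$.

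Next I would extract the functional form. From the forward implication, whenever $a_1+a_2=b_1+b_2$, then $f_C(a_1)+f_C(a_2)=f_C(b_1)+f_C(b_2)$. Setting $b_1 = a_1+h$ and $b_2 = a_2-h$ preserves the sum and yields
\begin{equation}
f_C(a_1+h) - f_C(a_1) = f_C(a_2) - f_C(a_2-h), \nonumber
\end{equation}
so the first difference $g_h(t) := f_C(t+h) - f_C(t)$ is independent of the base point $t$; call it $g(h)$. Telescoping $f_C(t+h_1+h_2)-f_C(t)$ in two pieces then gives the Cauchy identity $g(h_1+h_2) = g(h_1) + g(h_2)$. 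Because $f_C$ is continuous and monotone increasing, so is $g$, which forces $g(h) = Ah$ with $A>0$. Setting $h=t$ produces $f_C(t) = At + \beta$ with $\beta = f_C(0)$, and exponentiating~(\ref{eq:h_finalA}) delivers the claimed form $h(\r,\theta) = \exp(\beta)\,p(c\mid\x,\theta)^A$.

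The hard part will be the ``free choice'' step: arguing that $\overline\Theta$ is rich enough to dial $(a_1,a_2,b_1,b_2)$ independently through a dense subset of $(-\infty,0]^4$. I would handle this by appealing to the discretized multinomial convention at the start of the appendix, which makes independence of choice across distinct $\x$-values essentially tautological. A minor cosmetic point is that $f_C$ is only pinned down on the non-positive reals, where log-probabilities live, but that suffices since those are the only arguments at which $f_C$ is ever evaluated in~(\ref{eq:h_finalA}).
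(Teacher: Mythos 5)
Your proof is correct in substance but solves the key functional equation by a genuinely different method than the paper. The paper keeps $\tilde\theta$ general, arrives at the Jensen-type identity $f_{\tilde\theta}\bigl(\sum_i \tilde\theta_i u_i\bigr)=\sum_i \tilde\theta_i f_C(u_i)$ with $u_i=\log p(c_i\mid\x_i,\theta)$, makes the per-$\x$ normalization constraint explicit, and then differentiates with respect to each independent $u_k$; matching the $u_k$-dependence of the two sides forces $f'_{\tilde\theta}$ to be constant, after which evaluation at corner points gives $f_C(t)=At+\beta$. You instead specialize $\tilde\theta$ to a two-atom distribution on observations with distinct $\x$'s, extract the condition $a_1+a_2=b_1+b_2\Rightarrow f_C(a_1)+f_C(a_2)=f_C(b_1)+f_C(b_2)$, and turn it into Cauchy's equation for the increment $g(h)=f_C(t+h)-f_C(t)$, which monotonicity and continuity force to be linear. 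Your route buys two things: it needs only the continuity and monotonicity of $f_C$ stated in the hypothesis (the paper's differentiation of $f_C$ and $f_{\tilde\theta}$ is an unstated extra smoothness assumption), and by putting the two atoms at distinct $\x$'s you sidestep the per-$\x$ normalization bookkeeping that the paper must carry through its computation. What you correctly identify as the hard step --- that $(a_1,a_2)$ and $(b_1,b_2)$ can be prescribed (densely) independently by varying $\theta_1,\theta_2\in\overline\Theta$ --- is a richness assumption on the model family that the paper relies on equally but silently, when it treats the independent $u_k$'s as free coordinates to differentiate against; note only that the appendix's own example has $\overline\Theta$ a low-dimensional submanifold of the multinomial simplex, so the justification is really that the update rule is required to be universal across model families (so one may test it on a family where the conditionals at distinct $\x$'s are free), rather than being ``tautological'' for the particular $\overline\Theta$ at hand.
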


\begin{proof} 
Note,that we can decompose $K_{COND}$ as
\begin{equation}
K_{COND}(\tilde\theta,\theta)=S(\tilde\theta)-R(\tilde\theta,\theta)~~~,
\end{equation}
where $S(\tilde\theta)=\sum_{p(|\tilde\theta)}{\log{p(c|\x\tilde\theta)}}$
and $R(\tilde\theta,\theta)=\sum_{p(|\tilde\theta)}{\log{p(c|\x\theta)}}$.
Consider any $\tilde\theta$ and the set of points $\theta$ that
satisfy $R(\tilde\theta,\theta)=t$, where $t$ is some
constant. From the fifth axiom (the equality part) it follows that
there must exist a constant $f_{\tilde\theta}(t)$ that defines the
same set of points $\theta$, defined by
$\sum_{\r\in R}{
  p(\r|\tilde\theta)\log{h(\r,\theta)}}=f_{\tilde\theta}(t)$. 
From the inequality part
of the same axiom it follows that $f_{\tilde\theta}$ is a
monotonically increasing function. Hence,
\begin{equation}
\sum_{\r \in R}{
  p(\r|\tilde\theta)\log{h(\r,\theta)}}=f_{\tilde\theta}\left(
\sum_{\r \in R}{
  p(\r|\tilde\theta)\log{p(c|\x, \theta)}}
\right)~~~.
\label{eq:f}
\end{equation}

On the other hand, from equation (\ref{eq:h_finalA}) we know that we can write
\begin{equation}\label{eq:fcA}
\log{h(\r,\theta)}=f_C\left( \log p(c|\x,\theta)\right)~~~.
\end{equation}

So, equations (\ref{eq:f}) and (\ref{eq:fcA}) lead to

\begin{equation}\label{eq:linear?}
f_{\tilde\theta}\left(\sum_{\r \in R}{
  p(\r|\tilde\theta)\log{p(c|\x, \theta)}}\right)
=  \sum_{\r \in R}{
  p(\r|\tilde\theta)f_C\left(\log{p(c|\x, \theta)}\right)} ~~~.
\end{equation}

If we make a variable change $u_i = \log p(c_i \mid \x_i, \theta)$
and denote $p(\r_i|\tilde\theta) = \tilde\theta_i$
for brevity, equation (\ref{eq:linear?}) becomes

\begin{equation}\label{eq:easy}
f_{\tilde\theta}\left(\sum_{i} {\tilde\theta_i \; u_i } \right)
=  \sum_{i}{ \tilde\theta_i \; f_C\left( u_i\right)} ~~~.
\end{equation}

Not all $u_i$ are independent, however: for each fixed $\x$, one of the 
variables $u_l$ is determined by the other $u_i$'s

\begin{equation}\nonumber
\exp(u_i) = p(c_i \mid \x_i, \theta)~~~,
\end{equation}

and 
\begin{equation}\nonumber
\sum_{fixed \; \x} p(c_i \mid \x, \theta) = 1 \Leftrightarrow \sum_{fixed \; \x} \exp(u_i) = 1~~~.
\end{equation}
So the last $u_l$ for each $\x$ is
\begin{equation}
u_l = \log \left( 1- \sum_{\tiny \begin{array}{c} fixed \; \x\\indep. \; u_m \end{array}} \exp(u_m)\right )~~~,
\end{equation}
where the sum only includes the independent variables $u_m$ for the fixed $\x$.
This way we can make the dependency on each $u_i$ explicit in equation
(\ref{eq:easy}):

\begin{equation}\nonumber
f_{\tilde\theta}\left [\sum_{indep. \; u_j} {\tilde\theta_j \; u_j }
 + \sum_{dependent \; u_l} \tilde\theta_l \;  
\log \left( 1- \sum_{\tiny \begin{array}{c} fixed \; \x\\indep. \; u_m \end{array}} 
\exp(u_m)\right )
 \right]
\end{equation}
\begin{equation}\label{eq:difficult}
= \sum_{indep. \; u_j} {\tilde\theta_j \; f_C(u_j) } +
\sum_{dependent \; u_l}{\tilde\theta_l \; f_C \left (
\log \left( 1- \sum_{\tiny\begin{array}{c} fixed \; \x\\indep. \; u_m \end{array}} \exp(u_m)\right ) \right)}~~~.
\end{equation}

Let us differentiate both sides with respect to a $u_k$:

\begin{eqnarray}\label{eq:difficult2}
& \underbrace{f'_{\tilde\theta}\left ( \sum_{i} {\tilde\theta_i \; u_i } \right )}_{\alpha} \;
\left [ \tilde\theta_k - \underbrace{\frac{\tilde\theta_l}{u_l}}_{c_x} \; \exp(u_k)
 \right]\nonumber\\
= &\tilde\theta_k \; f'_C(u_k) -  \underbrace{\frac{\tilde\theta_l}{u_l}}_{c_x} \;
\underbrace{f'_C(u_l)}_{d_x}\; \exp(u_k)~~~.
\end{eqnarray}

For all such variables $u_k$ that share the same $\x$, we get

\begin{equation}\nonumber
\alpha \; \left [ \tilde\theta_k - c_x\; \exp(u_k) \right ] = 
\tilde\theta_k \; f'_C(u_k) - c_x\;d_x\; \exp(u_k)
\end{equation}
\centerline{$\Updownarrow$}
\begin{equation}\nonumber
\tilde\theta_k \; \exp(-u_k) \; \left (f'_C(u_k) - \alpha \right )
= c_x\;( d_x - \alpha)~~~.
\end{equation}

Since the right-hand side only depends on $\x$, not on individual $u_k$,
the left-hand side must also only depend on $\x$ and the factors depending on
$u_k$ must cancel out.

\begin{equation}\nonumber
f'_C(u_k) - \alpha = B_x \; \frac{\exp(u_k)}{\tilde\theta_k }
\end{equation}
\centerline{$\Updownarrow$}
\begin{equation}\nonumber
\underbrace{f'_{\tilde\theta}\left ( \sum_{i} {\tilde\theta_i \; u_i } 
\right )}_{does \; not\;  depend \; on\;  u_k}
=\underbrace{f'_C(u_k) - B_x \; \frac{\exp(u_k)}{\tilde\theta_k }}_{depends \; on \; u_k}~~~.
\end{equation}
Since the left-hand side depends neither on $u_k$ nor $\x$, both sides must be constant
\[
\implies f'_{\tilde\theta}(t) = A 
\]
\begin{equation}\label{eq:linear}
\implies f_{\tilde\theta}(t) = A\;t + \beta~~~.
\end{equation}

Substituting  (\ref{eq:linear}) into equation
(\ref{eq:easy}) we get
\begin{equation}\nonumber
A \; \left ( \sum_{i} {\tilde\theta_i \; u_i } 
\right ) + \beta = \sum_{i} {\tilde\theta_i \; f_C(u_i) } 
\end{equation}
\centerline{$\Updownarrow$}
\begin{equation}\nonumber
\sum_{i} { \tilde\theta_i \; \left ( A \;u_i - f_C(u_i) \right ) }= - \beta~~~,
\end{equation}
and since this must hold for any parameters $\tilde\theta$, it must also
hold for the corner points:
\[
 A \;u_i - f_C(u_i)  = - \beta 
\]
\begin{equation}
 \implies f_C(t) = A \; t + \beta~~~.
\end{equation}
\hfill $\Box$
\end{proof}

\subsection{Axiom 6 Implies Exponent $A=1$}

\begin{proposition}
From axiom 6 it follows that $A=1$.
\end{proposition}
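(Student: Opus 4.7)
My plan is to combine the multiplicative structure of $F$ (from the first proposition) with the explicit form of $h$ (from the previous proposition), apply Axiom~6, and read off the exponent.

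First, I would unfold the recursive update rule. By the multiplicativity proposition, $p_d(\theta\mid D\cup\r)\propto h(\r,\theta)\,p_d(\theta\mid D)$, so by induction starting from the prior $p(\theta)=p_d(\theta\mid\emptyset)$,
\begin{equation}\nonumber
p_d(\theta\mid D)\;\propto\; p(\theta)\prod_{\r\in D} h(\r,\theta).
\end{equation}
Substituting the form $h(\r,\theta)=\exp(\beta)\,p(c\mid\x,\theta)^A$ from equation~(\ref{eq:hlinA}), and noting that $\exp(\beta)$ is independent of $\theta$ and therefore absorbed into the normalization constant, I obtain
\begin{equation}\nonumber
p_d(\theta\mid D)\;\propto\; p(\theta)\prod_{(c,\x)\in D} p(c\mid\x,\theta)^A.
\end{equation}

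Next, I specialize to a data set $D_x$ consisting of observations that all share the same covariate $\x$, which gives
\begin{equation}\nonumber
p_d(\theta\mid D_x)\;\propto\; p(\theta)\prod_{(c,\x)\in D_x} p(c\mid\x,\theta)^A.
\end{equation}
By Axiom~6 this must coincide, as a normalized density in $\theta$, with the standard Bayesian posterior built from the conditional likelihood $p^x(c\mid\theta)=p(c\mid\x,\theta)$, namely
\begin{equation}\nonumber
p^x(\theta\mid D_x)\;\propto\; p(\theta)\prod_{(c,\x)\in D_x} p(c\mid\x,\theta).
\end{equation}

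Finally, I would argue that equality of these two normalized densities for every admissible $D_x$ forces $A=1$. Taking the ratio of the unnormalized forms yields $\bigl[\prod p(c\mid\x,\theta)\bigr]^{A-1}$, which must be $\theta$-independent for the two normalized posteriors to agree. Choosing $D_x$ (or even a single observation) so that $\prod p(c\mid\x,\theta)$ is a genuinely non-constant function of $\theta$ on $\overline\Theta$, which holds for any non-degenerate model family, we conclude that $A-1=0$, i.e., $A=1$. The only subtlety is ruling out the degenerate case where $p(c\mid\x,\theta)$ is constant in $\theta$; this is the model-family assumption implicit throughout, since otherwise there is nothing to infer about $\theta$ from $c$ at all. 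Hence $A=1$, and the main theorem follows. \hfill$\Box$
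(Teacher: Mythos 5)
Your proof is correct and follows essentially the same route as the paper: unfold the multiplicative update to get $p_d(\theta\mid D)\propto p(\theta)\prod p(c\mid\x,\theta)^A$, restrict to a fixed-$\x$ data set (a single observation suffices), take the ratio against the standard posterior, and conclude that $p(c\mid\x,\theta)^{A-1}$ must be constant in $\theta$, forcing $A=1$ for any non-degenerate model family. The paper states the same degenerate-case caveat you do, so nothing is missing.
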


\begin{proof}
Without axiom 6, the discriminative posterior would be unique up
to a positive constant $A$:
\begin{equation}
p_d(\theta\mid D)\propto p(\theta)\prod_{(c,\x)\in D}{p(c\mid
\x,\theta)^A}.
\end{equation}

Axiom 6 is used to fix this constant to unity by requiring the
discriminative posterior to obey the Bayesian convention for a
fixed $\x$, that is, the discriminative modeling should reduce
to Bayesian joint modeling when there is only one covariate.  
We require
that for a fixed
${\bf{x}}$ and a data set $D_x=\{(c,{\bf{x}}')\in D\mid {\bf{x}}'=
{\bf{x}}\}$ the discriminative posterior matches the
joint posterior $p_j^x$ of a model
$p^x$, where $p^x(c\mid \theta)\equiv p(c\mid \x,\theta)$,
\begin{equation}
p_j^x(\theta\mid D_x)\propto p(\theta) \prod_{c\in D_x}{p^x(c\mid \theta)}
\propto p(\theta)\prod_{(c,\x)\in D_x}{p(c\mid
\x,\theta)},
\end{equation}
Clearly, $A=1$ satisfies the axiom 6, i.e., $p_j^x(\theta\mid D_x)$
equals $p_d(\theta\mid D_x)$ for all $\theta\in\overline\Theta$. If
the proposal would be false, the axiom 6 should be satisfied for some
$A\ne 1$ and for all $\theta$ and data sets. 
In particular, the result should hold for a data set having a single
element, $(c,\x)$. The discriminative posterior would
in this case read
\begin{equation}
p_d(\theta\mid D_x)=\frac{1}{Z_A}p(\theta)p(c\mid\x,\theta)^A,
\end{equation}
where $D_x=\{(c,\x)\}$ and $Z_A$ is a normalization factor, chosen
so that the posterior satisfies
$\int{p_d(\theta\mid D)d\theta}=1$. The joint posterior would, 
on the other hand, read
\begin{equation}
p_j^x(\theta\mid D_x)=\frac{1}{Z_1}p(\theta)p(c\mid\x,\theta) .
\end{equation}
These two posteriors should be equal for all $\theta$:
\begin{equation}
1=\frac{p_d(\theta\mid D_x)}{p_j^x(\theta\mid D_x)}
= \frac{Z_1}{Z_A}p(c\mid\x,\theta)^{A-1}.
\label{eq:ax4ratio}
\end{equation}
Because the normalization factors 
$Z_1$ and $Z_A$ in equation (\ref{eq:ax4ratio}) are
constant in $\theta$, also $p(c\mid\x,\theta)^{A-1}$ must be constant
in $\theta$.  This is possible only if $A=1$ or
$p(c\mid\x,\theta)$ is a trivial function (constant in $\theta$) 
for all $c$.

\hfill $\Box$
\end{proof}

\end{document}